\tikzstyle{arg}=[draw,circle,fill=gray!15,inner sep=1pt,minimum size=.5cm]
\newtheorem{theorem}{Theorem}[section]
\newtheorem{example}[theorem]{Example}
\newtheorem{proposition}[theorem]{Proposition}
\newtheorem{definition}[theorem]{Definition}
\newtheorem{lemma}[theorem]{Lemma}
\newcommand{\ie}{i.e., }
\newcommand{\sts}{\mathit{sts}}
\newcommand{\nt}{\mathit{not}}
\newcommand{\HB}{\mathit{HB}}
\newcommand{\theory}{\mathit{Th}}
\newcommand{\cl}{\mathit{cl}}
\newcommand{\stb}{{\mathit{stb}}}
\newcommand{\cf}{\mathit{cf}}
\newcommand{\contrary}[1]{\overline{#1}}
\newcommand{\contraryempty}{\contrary{\phantom{a}}}
\title{
On the Correspondence of Non-flat Assumption-based Argumentation and Logic Programming with Negation as Failure in the Head}
\author{
Anna Rapberger$^1$,
Markus Ulbricht$^2$,
Francesca Toni$^1$\\[1ex]
$^1$Imperial College London, Department of Computing\\
$^2$Leipzig University, Department of Computer Science\\[1ex]
 \{a.rapberger,f.toni\}@imperial.ac.uk,\\ mulbricht@informatik.uni-leipzig.de
}
\date{}
\begin{document}

\maketitle

\begin{abstract}
The relation between (a fragment of) assumption-based argumentation (ABA) and logic programs (LPs) under stable model semantics is well-studied. 
However, for obtaining this relation, the ABA framework needs to be restricted to being flat, \ie a fragment where the (defeasible) 
assumptions can never be entailed, only assumed to be true or false. 
Here, we remove this restriction and show a correspondence between non-flat ABA and LPs with negation as failure in their head. 
We then extend this result to so-called set-stable ABA semantics, originally defined for the fragment of non-flat ABA called bipolar ABA. 
We showcase how to define set-stable semantics for LPs with negation as failure in their head and show the correspondence to set-stable ABA semantics. 
\end{abstract}

\section{Introduction}
Computational argumentation and logic programming constitute fundamental research areas in the field of knowledge representation and reasoning.
The correspondence between both research areas has been investigated 
extensively, revealing that
the computational argumentation and logic programming paradigms are inextricably linked and  provide orthogonal views on non-monotonic reasoning.
In recent years, researchers developed and studied various translations between \emph{logic programs (LPs)} and several argumentation formalisms, including translation from and to abstract argumentation~\cite{Dung95,WuCG09,CaminadaSAD15lp},
assumption-based argumentation~\cite{BondarenkoDKT97,SchulzT15,CaminadaS17,SaA21a,DBLP:journals/corr/abs-2308-15891},
argumentation frameworks with collective attacks~\cite{KonigRU22}, claim-augmented argumentation frameworks~\cite{Rapberger20,DvorakRW23}, and
abstract dialectical frameworks~\cite{AlcantaraSG19,Strass13a}.

The multitude of different translations sheds light on the close connection of 
negation as failure and argumentative conflicts. Apart from the theoretical insights, these translations are also practically enriching for both paradigms as they enable the application of methods developed for one of the formalisms to the other. On the one hand, translating logic programs to instances of formal argumentation has been proven useful for explaining logic programs~\cite{SchulzT16justifying}. Translations from argumentation frameworks into logic programs, on the other hand,  allows to utilise the rich toolbox for LPs, e.g., answer set programming solvers like clingo~\cite{DBLP:journals/tplp/GebserKKS19}, directly on instances of formal argumentation.

Existing translations consider \emph{normal LPs}~\cite{janhunen2004representing}, i.e., the class of 
LPs in which the head of each rule 
amounts precisely to one positive atom.
In this work, we take one step further and consider \emph{LPs with negation as failure in the head} of rule~\cite{INOUE199839}
. 
We investigate 
the relation of this more general class of LPs to 
\emph{assumption-based argumentation (ABA)}~\cite{BondarenkoDKT97}. 
This is a versatile structured argumentation formalism which models argumentative reasoning on the basis of assumptions and inference rules. ABA can be suitably deployed in multi-agent settings to support dialogues~\cite{FanT14} and supports applications in, e.g., healthcare~\cite{CyrasOKT21}, 
law~\cite{DungTH10-law} and robotics~\cite{FanLZLM16}.

Research in ABA often focuses on the so-called \emph{flat ABA} fragment, which prohibits deriving assumptions from inference rules. 
In this work, we show that generic (potentially non-flat) ABA (referred to improperly but compactly as \emph{non-flat ABA}~\cite{CyrasFST2018}) captures the more general fragment of LPs with negation as failure in the head, 
differently from all of the aforementioned argumentation formalisms. This underlines the increased and more flexible modelling capacities of the generic ABA formalism.

In this work, 
we investigate the relationship between non-flat ABA and LP with negation in the head, focusing 
on
\emph{stable}~\cite{BondarenkoDKT97} and \emph{set-stable}~\cite{Cyras0T17} semantics. 
While stable semantics is well understood, the latter has not been studied thoroughly so far. 
Set-stable semantics have been originally introduced for a restricted non-flat ABA fragment (\emph{bipolar ABA}~\cite{Cyras0T17}) only, with the goal to study the correspondence between ABA and a generalisation of abstract argumentation that allows for support between arguments (bipolar argumentation~\cite{CayrolL05a}). In this paper we adopt it for any non-flat ABA framework 
and study 
it in the context of LPs with negation as failure in the head.

In more detail, our contributions are as follows:
\begin{itemize}
    \item We show that each LP with negation as failure in the head corresponds to a 
    non-flat 
    ABA framework under stable semantics. 
    \item We identify an ABA fragment (\emph{LP-ABA}) 
    in which the correspondence to LPs with negation as failure in the head is 1-1.
    We prove that each non-flat ABA framework corresponds to an LPs with negation as failure in the head by showing that each ABA framework can be mapped into an LP-ABA framework.
    \item We introduce set-stable model semantics for LPs with negation as failure in the head. We identify the LP fragment corresponding to bipolar ABA 
    under set-stable semantics. We furthermore consider the set-stable semantics for any LPs with negation as failure in the head by appropriate adaptions of the reduct underpinning stable models~\cite{INOUE199839}.
\end{itemize}

\section{Background}
\label{sec:background}
We recall logic programs with negation as failure in the head~\cite{INOUE199839} and assumption-based argumentation~\cite{BondarenkoDKT97}.
\subsection{Logic programs with negation as failure 
in head}
A \emph{logic program 
with negation as failure (naf) in the head}~\cite{INOUE199839} (LP in short 
in the remainder of the paper) consists of a set of rules $r$ of the form 
\begin{align*}
a_0\gets & a_1,\dots,a_m,\nt\ b_{m+1},\dots,\nt\ b_{n}\\
\nt\ 
a_0 \gets & a_1,\dots,a_m,\nt\ b_{m+1},\dots,\nt\ b_{n}
\end{align*}
for $n \geq 0$, (propositional) atoms $a_i,b_i$, for $i\leq n$, and naf operator $\nt$.
We write $head(r)\!=\!
a_0$ and $head(r)\!=\!\nt\ 
a_0$, respectively,
and $body(r)\!=\! \{a_1,\dots,a_m,\nt\ b_{m+1},\dots,\nt\ b_{n}\}.$
Furthermore, we let
$body^+(r)=\{a_1,\dots,a_m\}$ denote the positive and $body^-(r)=\{b_{m+1},\dots,b_{n}\}$ denote the negative atoms occuring in the body of $r$; moreover, we let $head^-(r)=\{a_0\}$ if $ head(r)=\nt\ a_0$ and $head^-(r)=\emptyset$ otherwise  
(analogously for $head^+(r)$).

\begin{definition}
The \emph{Herbrand Base} of an LP $P$ is the set $\HB_P$ of all atoms occurring in $P$.
By $$\overline{\HB_P}=\{\nt\ p\mid p\in \HB_P\}$$ we denote the set of all naf-negated atoms in $\HB_P$.    
\end{definition}

We call an LP $P$ a 
\emph{normal program} 
if  $head^-(r)\!=\!\emptyset$ for each $r\in P$ and a
\emph{positive program} if $body^-(r)\!=\!head^-(r)\!=\!\emptyset$ for each $r\in P$. 
Given $I\subseteq \HB_P$
, 
the \emph{reduct} $P^I$ of 
$P$ 
is 
the positive program 
\begin{align*}
    P^I = \{ &head^+(r) \gets body^+(r) \mid \\
             &body^-(r)\cap I = \emptyset,\; head^-(r)\subseteq I \}.%
\end{align*}
We are ready to define stable LP semantics. 
\begin{definition}\label{def:stable model semantics}
$I\subseteq \HB_P$ is a \emph{stable model}~\cite{INOUE199839} of an LP $P$ if 
$I$ is a $\subseteq$-minimal Herbrand model of $P^I$, i.e., $I$ is a $\subseteq$-minimal set of atoms satisfying
\begin{itemize}
    \item[(a)] $p\!\in\! I$ iff there is a rule $r\!\in\! P^I$ s.t.\ $head(r)\!=\!p$ and $body(r)\!\subseteq\! I$;
    \item[(b)] there is no rule $r\in P^I$ with $head(r)=\emptyset$ and $body(r)\subseteq I$.
\end{itemize}
\end{definition}
Negation as failure 
in the head can be 
also interpreted 
in terms of (denial integrity) constraints, i.e., rules with empty head.
Let us consider the following example.
\begin{example}
    \label{ex:running example LP}
    Consider the LP $P$ given as follows.
    \begin{align*}
        ~P:~&p\gets\nt\ q &&q\gets\nt\ p &&s\gets  &&\nt\ s\gets s, \nt\ p.
    \end{align*}
    Here, $P$ models a choice between $p$ and $q$. However, as $s$ is factual and $\nt\ p$ entails $\nt\ s$ (together with the fact $s$), 
    $q$ is rendered impossible. 
    
    For the sets of rules $I_1 = \{p,s\}$ and $I_2 = \{q,s\}$ we obtain the following reducts: 
    \begin{align*}
        P^{I_1}:~&p\gets && &&s\gets  && \\
        P^{I_2}:~&&&q\gets &&s\gets  &&\emptyset\gets s
    \end{align*}
    We see that $I_1$ is a minimal Herbrand model of $P^{I_1}$, whereas $I_2$ is rendered invalid due to the rule $\emptyset \gets s$.
    Thus, 
    this rule can be seen as a denial integrity 
    constraint amounting to ruling out the atom~$s$.
\end{example}

\subsection{Assumption-based Argumentation}
We recall assumption-based argumentation (ABA)~\cite{BondarenkoDKT97}.
A \emph{deductive system} is a pair $(\mathcal{L},\mathcal{R})$, where  $\mathcal{L}$ is a formal language, i.e., a set of sentences,
and $\mathcal{R}$ is a set of inference rules over $\mathcal{L}$. A rule $r \in \mathcal{R}$ has the form
$$a_0 \leftarrow a_1,\ldots,a_n$$ for $n\geq 0$, with $a_i \in \mathcal{L}$.
We denote the head of $r$ by $head(r) = a_0$ and the (possibly empty)
body of $r$ with $body(r) = \{a_1,\ldots,a_n\}$.
\begin{definition}
	An \emph{ABA framework (ABAF)}~\cite{CyrasFST2018} is a tuple $(\mathcal{L},\!\mathcal{R},\!\mathcal{A},\!\contraryempty)$ for $(\mathcal{L},\!\mathcal{R})$ a deductive system, $\mathcal{A}\! \subseteq \!\mathcal{L}$ 
 the
\emph{assumptions}, and $\contraryempty\!\!:\!\!\mathcal{A}\!\rightarrow\! \mathcal{L}$ 
a \emph{contrary} function.
\end{definition}
	In this work, we focus on \emph{finite} ABAFs, i.e., $\mathcal{L}$, $\mathcal{R}$, $\mathcal{A}$ are finite; also, $\mathcal{L}$ is a \emph{set of atoms or naf-negated atoms}.

For a set of assumptions $S\subseteq \mathcal{A}$, we let $\contrary{S}=\{\contrary{a}\mid a\in S\}$ denote the set of all contraries of assumptions $a\in S$.

Below, we recall the fragment of \emph{bipolar ABAFs}~\cite{Cyras0T17}.
\begin{definition}
    An ABAF $(\mathcal{L},\!\mathcal{R},\!\mathcal{A},\!\contraryempty)$ is \emph{bipolar} iff for all rules $r\in \mathcal{R}$, it holds that $|body(r)|= 1$, $body(r)\subseteq \mathcal{A}$, and $head(r)\in \mathcal{A}\cap \contrary{\mathcal{A}}$.
\end{definition}

Next, we recall the crucial notion of \emph{tree-derivations}.
A sentence $s \in \mathcal{L}$ is \emph{tree-derivable} from assumptions $S \subseteq \mathcal{A}$ and rules $R \subseteq \mathcal{R}$, denoted by $S \vdash_R s$, if there is a finite rooted labeled tree $T$ s.t.\ the root is labeled with $s$; the set of labels for the leaves of $T$ is equal to $S$ or $S \cup \{\top\}$, where $\top \not\in \mathcal{L}$;
for every inner node $v$ of $T$ there is exactly one rule $r \in R$ such that $v$ is labelled with $head(r)$, and for each $a\in body(r)$ the node $v$ has a distinct child labelled with $a$; if $body(r)\!=\!\emptyset$, $v$ has a single child labelled~$\top$; for every rule in $R$ there is a node in $T$ labelled by $head(r)$.
We often write
$S \vdash_R p$ simply as $S \vdash p$.
Tree-derivations are the arguments in ABA; be use both notions interchangeably. 

For a set of assumptions $S$,
by $\theory_D(S)=\{p \in \mathcal L\mid \exists S'\subseteq S:S'\vdash
p\}$
we denote the set of all 
sentences derivable from 
(subsets of) $S$. Note that $S\subseteq \theory_D(S)$ since each 
$a\in \mathcal{A}$ is derivable from $\{a\}$ and rule-set $\emptyset$ ($\{a\}\vdash_\emptyset a$).
The \emph{closure} of $S$ is given by $\cl(S)=\theory_D(S)\cap \mathcal{A}$.

\begin{definition}
Let $D=(\mathcal{L},\mathcal{R},\mathcal{A},\contraryempty)$ be an ABAF. 
A set of assumptions $S\subseteq \mathcal A$ \emph{attacks} a set of assumptions $T\subseteq \mathcal A$
if $\contrary{a}\in\theory_D(S)$ for some $a\in T$.
An assumption-set $S$ is \emph{conflict-free} ($S\in\cf(D)$) if it does not attack itself; it is \emph{closed} if 
$
cl(S)=S$.    
\end{definition}

We recall 
stable~\cite{CyrasFST2018} and set-stable~\cite{Cyras0T17} ABA semantics (abbr.\ 
$\stb$ and $\sts$, respectively).
Note that, while set-stable semantics has been defined for bipolar ABAFs only, we generalise the semantics to arbitrary ABAFs.

\begin{definition}
	Let $D=(\mathcal{L},\mathcal{R},\mathcal{A},\contraryempty)$ be an ABAF. 
	Further, let $S \in \cf(D)$ be closed.
	\begin{itemize}
		\item $S\in \stb(D)$ if $S$ attacks each $\{x\} \subseteq \mathcal{A} \setminus S$;
        \item $S\in \sts(D)$ if $S$ attacks $\cl(\{x\})$ for each $x \in \mathcal{A} \setminus S$.
	\end{itemize}
\end{definition}

\begin{example}\label{ex:background ABA}
We consider an ABAF 
$D= (\mathcal{L},\mathcal{R},\mathcal{A},\contraryempty)$ with 
assumptions $\mathcal{A} = \{a,b,c\}$, their contraries $\contrary{a}$, $\contrary{b}$, and $\contrary{c}$, respectively, and rules
\begin{align*}
	\contrary{b}\gets c. && {b}\gets a. &&  \contrary{c}\gets a,b.
\end{align*}
The framework is \emph{non-flat} because we can derive $b$ from $a$.

In $D$, the set $\{c\}$ is set-stable:
Clearly, the assumption does not attack itself. 
It remains to show that the closure of $a$ and the closure of $b$ is attacked.
First note that $c$ attacks $b$ since $\{c\}\vdash \contrary{b}$.
Thus, $c$ attacks also the closure of $b$.
It follows that $c$ furthermore attacks the closure of $a$ since $\cl(\{a\})=\{a,b\}$. 
This shows that $\{c\}$ is set-stable.

Moreover, the set $\{a,b\}$ is stable and set-stable in $D$ because it is conflict-free and attacks the assumption $c$ via the argument $\{a,b\}\vdash \contrary{c}$.
\end{example}

\section{Stable Semantics Correspondence}\label{sec:stable}
In  this section, we
show that non-flat ABA under stable semantics  correspond to stable model semantics for logic programs with negation as failure in the head.
First, we show that each LP can be translated into a non-flat ABAF;
second, we present a translation from a restricted class of ABAFs (LP-ABA) into LPs; third, we extend the correspondence result to general ABAFs by providing a translation from general non-flat ABA into LP-ABA.
We conclude this section by discussing denial integrity constraints in non-flat ABA.

\subsection{From LPs to ABAFs}
Each LP $P$ can be interpreted as ABAF with assumptions $\nt\ p$ and contraries thereof, for each literal in the Herbrand base $\HB_P$ of $P$. 
We recall the translation from normal programs to flat ABA~\cite{BondarenkoDKT97}.
\begin{definition}\label{trans:LP to ABA}
The \emph{ABAF corresponding to an LP $P$} is $D_P=(\mathcal{L},\mathcal{R},\mathcal{A},\contraryempty)$ with
$\mathcal{L}=\HB_P\cup \overline{\HB_P}$,
$\mathcal{R}=P$,
$\mathcal{A}=\overline{\HB_P}$,
and $\contrary{\nt\ x}=x$ for each 
$\nt\ x \in \mathcal{A}$.
\end{definition}

\begin{example}
    Consider again the LP from Example~\ref{ex:running example LP}.
    \begin{align*}
        ~P:~&p\gets\nt\ q &&q\gets\nt\ p &&s\gets  &&\nt\ s\gets s, \nt\ p
    \end{align*}
    Here
    $D_P = (\mathcal L, \mathcal R, \mathcal A, \contraryempty)$ is the ABAF with 
    \begin{align*}
        \mathcal L =&\, \{ p,q,s, \nt\ p, \nt\ q, \nt\ s \}\\
    \mathcal R = &\, P\\
    \mathcal A = &\, \{\nt\ p, \nt\ q, \nt\ s \}
    \end{align*}
    and contrary function
    $\contrary{\nt\ x} = x$ for each $x\in \{p,q,s\}$. 
    Recall that $I_1 = \{p,s\}$ is a stable model of $P$. 
    Naturally, this set corresponds to the singleton assumption-set 
    $S = \{ \nt\ q \}$. 
    Indeed, since $p$ is derivable from $\{\nt\ q\}$ and $s$ is factual, it holds that 
    $\theory_{D_P}(S) = \{ \nt\ q, p, s \}$
    which suffices to see that $S\in\stb(D_P)$. 
\end{example}
Let us generalize the observations we made in this example. 
We translate a set of atoms $I$ (in $\HB_P$ for an LP $P$) into 
an assumption-set $\Delta(I)$
(in the ABAF $D_P$) by collecting all assumptions ``$\nt\ p$'' corresponding to the atoms \emph{outside} $I$; that is, we set $$\Delta(I)=\{\nt\ p\mid p\notin I\}.$$  
We will prove that $I$ is a stable model (in $P$) iff $\Delta(I)$ is a stable extension (in $D_P$).
First, we introduce a notion of reachability in logic programs that is based on the construction of arguments.
\begin{definition}
    Let $P$ be an LP. An atom $p\in \HB_P\cup \contrary{\HB_P}$ is \emph{reachable} from a set of naf literals $N\subseteq \contrary{\HB_P}$ iff there is a tree-based argument $N'\vdash p$ with $N'\subseteq N$ in the corresponding ABAF $D_P$.
\end{definition}
Note that the reachability target is defined for both positive and negative atoms; the source on the other hand is always a set of naf literals. The notion differs from reachability based on dependency graphs which is defined for positive atoms only.

Below, we prove our first main result.
\begin{restatable}{theorem}{thmsemanticscLPtoABA}\label{prop: P and DP semantics corresp}\label{thm:P to ABA}
Let $P$ be an LP and $D_P$ the ABAF corresponding to $P$. Then $I$ is a stable model of $P$ iff $\Delta(I)\in\stb(D_P)$. 
\end{restatable}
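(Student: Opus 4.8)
The plan is to unfold what it means for $\Delta(I)$ to be a stable assumption-set of $D_P$ into statements about what is derivable from $\Delta(I)$, and to match these with conditions~(a)--(b) of Definition~\ref{def:stable model semantics}, exploiting that $I\mapsto\Delta(I)$ is a bijection between subsets of $\HB_P$ and subsets of $\mathcal A=\overline{\HB_P}$ with $p\in I$ iff $\nt\ p\notin\Delta(I)$. Since $\mathcal A\setminus\Delta(I)=\{\nt\ q\mid q\in I\}$ and $\contrary{\nt\ q}=q$, unfolding the definition of $\stb$ shows that $\Delta(I)\in\stb(D_P)$ is equivalent to the conjunction of \textbf{(i)} for all $p\in\HB_P$, $p\in\theory_{D_P}(\Delta(I))$ iff $p\in I$ (conflict-freeness yields the ``only if'', and ``$\Delta(I)$ attacks every singleton in $\mathcal A\setminus\Delta(I)$'' the ``if''), and \textbf{(ii)} for all $q\in I$, $\nt\ q\notin\theory_{D_P}(\Delta(I))$ (this is exactly closedness of $\Delta(I)$, the case $q\notin I$ being automatic). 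Thus it suffices to show that $I$ is a stable model of $P$ iff (i) and (ii) hold.

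The bridge between the two sides is the relationship between tree-derivations in $D_P$ from $\Delta(I)$ and the reduct $P^I$. A rule $r\in P$ contributes the non-constraint rule $head^+(r)\gets body^+(r)$ to $P^I$ exactly when $head(r)$ is a positive atom and $body^-(r)\cap I=\emptyset$ (equivalently, all naf-literals of $body(r)$ lie in $\Delta(I)$), and it contributes the constraint $\emptyset\gets body^+(r)$ exactly when $head(r)=\nt\ a_0$ with $a_0\in I$ and $body^-(r)\cap I=\emptyset$; write $P^I_r$ for the constraint-free part. The key lemma, proved by induction on the height of the derivation tree, is: \emph{if $\Delta(I)$ is closed, then for every tree-argument $\Delta'\vdash x$ with $\Delta'\subseteq\Delta(I)$, if $x\in\HB_P$ then $x$ lies in the least Herbrand model of $P^I_r$, and if $x=\nt\ q$ then $q\notin I$} -- the second half being immediate from closedness, and feeding the inductive step of the first: at a positive-atom node built from $r$, every naf-literal child $\nt\ c$ is either a leaf of $\Delta'$ or, being derivable with $\Delta(I)$ closed, already in $\Delta(I)$, so $c\notin I$, so $body^-(r)\cap I=\emptyset$ and the reduct rule of $r$ is in $P^I_r$. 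Conversely, any derivation of an atom $a$ in $P^I_r$ lifts to a tree-argument for $a$ from $\Delta(I)$: replace each reduct rule $head^+(r)\gets body^+(r)$ by the original $r$ and attach the (available) assumptions $body^-(r)\subseteq\Delta(I)$ as leaves. This is precisely where the \emph{non-flatness} of $D_P$ must be confronted: derivations may use rules with naf-literal heads as inner nodes, so the two sides are not in plain structural bijection, and closedness is what tames these inner occurrences.

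For the direction ``$I$ stable $\Rightarrow$ (i),(ii)'' one cannot invoke closedness of $\Delta(I)$ yet, so I would prove by simultaneous induction on tree height that for every $\Delta'\vdash x$ with $\Delta'\subseteq\Delta(I)$: $x\in I$ if $x\in\HB_P$, and $q\notin I$ if $x=\nt\ q$. In the inductive step with root rule $r$, the induction hypothesis gives $body^+(r)\subseteq I$ and $body^-(r)\cap I=\emptyset$; if $head(r)$ is a positive atom $a$ then $a\gets body^+(r)\in P^I_r$ and, as $I$ is a model of $P^I_r$ (being a stable model), $a\in I$; if $head(r)=\nt\ q$ and we had $q\in I$, then $\emptyset\gets body^+(r)$ would be a constraint of $P^I$ with body inside $I$, contradicting~(b), so $q\notin I$. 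This yields conflict-freeness and closedness, hence (ii) and the ``only if'' of (i); the ``if'' of (i) follows by the lifting above, since a stable $I$ is the least Herbrand model of $P^I_r$ and so every $p\in I$ has a $P^I_r$-derivation.

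For ``(i),(ii) $\Rightarrow$ $I$ stable'': (ii) is closedness of $\Delta(I)$, so the bridge lemma applies. Condition~(a) is obtained from (i): a reduct rule $a\gets body^+(r)$ with $body^+(r)\subseteq I$ lets us assemble, recursing on $body^+(r)\subseteq I$ via (i) with $body^-(r)\subseteq\Delta(I)$ as leaves, a tree-argument for $a$, so $a\in\theory_{D_P}(\Delta(I))$ and hence $a\in I$; conversely, if $a\in I$ then by (i) there is a tree-argument for $a$ whose root rule, once closedness is applied to its naf-literal children and (i) to its positive children, descends to a reduct rule of $P^I$ witnessing (a). Condition~(b) follows from (ii): a violating constraint $\emptyset\gets body^+(r)$ of $P^I$ with $body^+(r)\subseteq I$ would yield, by (i), a tree-argument for the corresponding naf-literal $\nt\ a_0$ from $\Delta(I)$ with $a_0\in I$, contradicting (ii). Finally the bridge lemma together with (i) gives $I=$ the least Herbrand model of $P^I_r$, which by (b) is also the unique minimal Herbrand model of $P^I$, so $I$ is a stable model. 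The main obstacle throughout is the one flagged above -- handling the inner occurrences of naf-literal-headed rules -- which forces the closedness and soundness arguments to be run together by induction on tree height and makes the constraint condition~(b) the exact mechanism ruling out a derivable $\nt\ q$ with $q\in I$.
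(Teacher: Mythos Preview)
Your proposal is correct and complete; the argument goes through as written. The route you take, however, differs in organisation from the paper's.

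The paper never introduces the constraint-free reduct $P^I_r$ or the least-Herbrand-model bridge explicitly. For the forward direction it argues each of conflict-freeness, closedness, and ``attacks all'' separately by contradiction: conflict-freeness via a \emph{minimal-height counterexample} (pick a $p\notin I$ reachable from $\Delta(I)$ with a derivation of least height and inspect its top rule), and ``attacks all'' by \emph{removing an element} (if some $p\in I$ is not reachable, then $I\setminus\{p\}$ is still a model of $P^I$, contradicting $\subseteq$-minimality of $I$). You instead prove a single simultaneous induction on tree height establishing soundness of derivations against $I$ (for both positive and naf-headed conclusions), and obtain ``attacks all'' from the converse lifting plus the fact that a stable $I$ equals the least model of $P^I_r$. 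For the backward direction the two proofs are closer: both do an induction on tree height, but you package the correspondence into a reusable bridge lemma and recover $\subseteq$-minimality by identifying $I$ with the least model of $P^I_r$, whereas the paper argues minimality directly by noting every $p\in I$ has a supporting rule in $P^I$.

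What your decomposition buys is a cleaner separation of concerns: conditions (i) and (ii) correspond transparently to the three stability requirements, and the single bridge lemma does the heavy lifting uniformly in both directions. It also makes explicit why condition~(b) is exactly the closedness obstruction. The paper's approach, in turn, avoids naming the auxiliary object $P^I_r$ and the least-model machinery, trading structural clarity for brevity in each sub-argument. Both handle the non-flat subtlety (naf-headed rules as inner nodes of derivations) in the same way---by pushing those sub-derivations back into $\Delta(I)$ via closedness---which you flag explicitly and the paper leaves implicit in its top-rule inspections.
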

\begin{proof}
By definition, a set $I$ is stable iff it is $\subseteq$-minimal model in $P^I$ satisfying
\begin{itemize}
    \item[($a$)] $p\in I$ iff there is $ r\in P^I$ such that $head(r)=p$ and $body(r)\subseteq I$; and
    \item[($b$)] there is no $r\in P^I$ with $head(r)=\emptyset$ and $body^+(r)\subseteq I$.
\end{itemize}
By definition of $P^I$ we obtain $I$ is a stable model of $P$ iff $I$ is a $\subseteq$-minimal model of $P^I$ satisfying
\begin{itemize}
    \item[($a$)] $p\in I$ iff there is $ r\in P$ such that $head(r)=p$, $body^+(r)\subseteq I$, and $body^-(r)\cap I=\emptyset$; and
    \item[($b$)] there is no $r\in P$ with $head^-(r)\subseteq I$, $body^+(r)\subseteq I$, and $body^-(r)\cap I=\emptyset$.
\end{itemize}
Below, we show that the first item and the $\subseteq$-minimality requirement captures conflict-freeness (no naf literal in $I$ is derived) and the requirement that all other assumptions are attacked (all other naf literals outside $I$ are derived); whereas the second item ensures closure of the program.\footnote{We note that in the case of normal logic programs without negation in the head, the second condition does not apply. It is well known and has been discussed thoroughly in the literature that (a) holds iff $\Delta(I)$ is stable in $D_P$~\cite{SchulzT15,CaminadaS17}.}

First, Let $I$ be a stable model of $P$ and let $S=\Delta(I)$. We show that $S$ is stable in $D_P$, i.e., it is conflict-free, closed, and attacks all assumptions in $\mathcal{A}\setminus S$.
\begin{itemize}
    \item \underline{$S$ is conflict-free:}
    $S$ is conflict-free iff there is no $p\in \HB_P\setminus I$ such that $p$ is reachable, i.e., can be derived from $S$. If such a derivation would exist, then the assumption $\nt\ p\in S$ were attacked by $S$.
    
    Towards a contradiction, suppose there is an atom $p\in \HB_P\setminus I$ which is reachable from $S$. 
    Let $$Q=\{p\in \HB_P\setminus I\mid S\vdash p\}$$ denote the set of atoms that are reachable from $S$ but lie `outside' $I$. We order $Q$ according the height of the smallest tree-derivation.
    
    Wlog, we can assume that our chosen atom $p$ is minimal in $Q$, i.e., there is no other atom $q\in \HB_P\setminus I$ which is reachable in less steps.
    Let $S'\vdash p$ denote the smallest tree-derivation, and let $r$ denote the top-rule (the rule connecting the root $p$ with the fist level of the tree) of the derivation. 
    The rule satisfies $head(r)=p$, $body^-(r)\cap I=\emptyset$, and $body^+(r)\subseteq I$ (otherwise, there is an atom $q\notin I$ with a smaller tree-derivation, contradiction to the minimality of $p$ in $Q$).
    Consequently, we obtain that $p\in I$, contradiction to our initial assumption.
    \item \underline{$S$ attacks all other assumptions:} 
    Suppose there is an atom $p\in I$ which is not reachable from $S$. We show that $I'=I\setminus p$ is a  model of $P^I$. 
    That is, we show that $I'$ satisfies each rule in $P^I$. By assumption there is is no rule $r\in P$ such that $head(r)=p$, $body^+(r)\subseteq I'$, and $body^-(r)\cap I'=\emptyset$ (otherwise, $p$ is reachable from $S$).
    Hence $p\in I'$ iff there is $r\in  P^I$ such that $head(r)=p$ and $body^+(r)\subseteq I'$ is satisfied.
    $I'$ satisfies all constraints since, by assumption, there is no $r\in P^I$ with $head(r)=\emptyset$ and $body^+(r)\subseteq I$.
    Thus $I'$ is a model of $P^I$. 
    Consequently, $I$ cannot be a stable model, contradiction to our initial assumption.
    \item \underline{$S$ is closed:}
    Towards a contradiction, suppose that there is some $p\in I$ such that the corresponding naf literal $\nt\ p$ is reachable. 
    Let $r$ be the top-rule of the tree-derivation. It holds that $body^+(r)\subseteq I$ (otherwise, there is some $q\in \HB_P\setminus I$ which is reachable, contradiction to the first item), $body^-(r)\cap I=\emptyset$ and $head(r)=\nt\ p$.
    Consequently, item (b) from Definition~\ref{def:stable model semantics} is violated.
\end{itemize}
This concludes the proof of the first direction. We have shown that $S=\Delta(I)$ is stable in $D_P$.

Now, let $S=\Delta(I)$ be a stable extension in $D_P$. We show that $I$ is stable in $P$.
\begin{itemize}
    \item Let $p\in I$. Then we can construct an argument $S'\vdash p$, $S'\subseteq S$ in $D_P$, i.e., is reachable from $S$. 
    We show that there is a rule $r$ with $body^+(r)\subseteq I$, $body^-(r)\cap I=\emptyset$ and $head(r)=p$.
    We proceed by induction over the height of the argument, that is, the height of the tree-derivation.
 \begin{itemize}
    \item \underline{Base case:} Suppose $S'\vdash p$ has height 1. 
    Then there is $r\in P$ with $head(r)=p$, $body^+(r)=\emptyset$, and $body^-(r)\cap S=\emptyset$.

    \item 
\underline{$n\mapsto n+1$:}
Suppose now that the statement holds for all arguments of height smaller than or equal to $n$, and suppose $S'\vdash p$ has height $n+1$. Let $r$ denote the top-rule of the tree-derivation. 

We derive the statement by applying the induction hypothesis to all height-maximal sub-arguments (with claims in $body(r)$) of our fixed tree-derivation: 
Let $p'\in body(r)$. 
The sub-tree with root $p'$ is an argument of height $n$. Hence, by induction hypothesis, $\Delta(I)$ derives $p'$, i.e.,
there is $r'\in P$ with $head(r')=p'$, $body^+(r')\subseteq I$, and $body^-(r')\cap I=\emptyset$.
In case $p'$ is a positive literal, we obtain $p'\in I$ (by (a) from Definition~\ref{def:stable model semantics}); in case $p'$ is a naf literal, we obtain $p'\in \Delta(I)$ (by (b)). 
Since $p'$ was arbitrary, we obtain $body^+(r)\subseteq I$ and $body^-(r)\cap I=\emptyset$.
\end{itemize}
    \item For the other direction, suppose there is a rule $r\in P$ with $body^+(r)\subseteq I$, $body^-(r)\cap I=\emptyset$ and $head(r)=p$. We can construct arguments for all $body^+(r)\subseteq I$ and thus obtain $p\in I$.
    \item 
    Towards a contradiction, suppose there is a $r\in P$ with $body^+(r)\subseteq I$, $body^-(r)\cap I=\emptyset$ and $head(r)=\nt\ p$ for some $p\in I$.
    Then we can construct an argument for $\nt\ p$, contradiction to $S$ being closed. 
    \item It remains to show that $I$ is a $\subseteq$-minimal model of $P^I$. Since each atom $p\in I$ has an argument in $D_P$ we obtain minimality:
    Towards a contradiction, suppose there is a model $I'\subsetneq I$ of $P^I$. Let $p\in I\setminus I'$.  
    Since there is an argument deriving $p$ there is some $r\in P^I$ with $head(r)=p$ and $body(r)\subseteq I$, showing that $I'$ is not a model of $P^I$.\qedhere
\end{itemize}
 \end{proof}

\subsection{From ABAFs to LPs}
For the other direction, we define a mapping 
so
that each assumption corresponds to a naf-negated atom. 
However, we need to take into 
account that ABA is a more general formalism. Indeed, in LPs, there is a natural bijection between ordinary atoms and naf-negated ones (\ie $p$ corresponds to $\nt\ p$). 
Instead, in ABAFs, 
assumptions can have the same contrary, they can be the contraries of each other, and not every 
sentence is the contrary of an assumption in general. 
To show the correspondence (under stable semantics), we proceed in two steps: 
\begin{enumerate}
    \item We define the \emph{LP-ABA} fragment in which
    i) no assumption is a contrary, 
    ii) each assumption has a unique contrary, and 
    iii) no further 
    sentences exist, \ie each element in $\mathcal{L}$ is either an assumption or the contrary of an assumption. 
    We show that the translation from such LP-ABAFs to LPs is semantics-preserving. 
    \item We show that each ABAF (whose underpinning language is restricted to atoms and
their naf) can be transformed to an LP-ABAF whilst preserving semantics.
\end{enumerate}

\paragraph{Relating LP and LP-ABA}
Let us start by defining the LP-ABA fragment. A similar fragment for the case of normal LPs and flat ABAFs has been already considered~\cite{CaminadaS17,CyrasFST2018,LehtonenWJ21}. Here, we extend it to the more general case. 
\begin{definition}
    The \emph{LP-ABA  fragment} is the class of all ABAFs 
    $D=(\mathcal{L},\mathcal{R},\mathcal{A},\contraryempty)$ 
    where
         (1) $\mathcal A \cap \contrary{\mathcal A} = \emptyset$,
         (2) the contrary function
         $\contraryempty$ is injective, 
     and
         (3) $\mathcal L = \mathcal A \cup \contrary{\mathcal A}$.
\end{definition}
We show that each LP-ABAF 
corresponds to an LP, using a translation similar to~\cite{CaminadaS17}[Definition 11] (which is however for flat ABA).
We replace each assumption $a$ with  $\nt\ \contrary{a}$.
\newcommand{\repl}{\mathit{rep}}
For an atom $p\in \mathcal L$, we let 
 $$\repl(p)=
\begin{cases}
\nt\ \contrary{p}, & \text{ if }p\in \mathcal{A}\\
\contrary{a}, & \text{ if }p = \contrary{a}\in\contrary{\mathcal{A}}.
\end{cases}$$
Note that in the LP-ABA fragment, this case distinction is exhaustive. 
We extend the operator to ABA rules element-wise: 
$\repl(r)=\repl(head(r))\gets \{\repl(p)\mid r\in body(r)\}$.%
\begin{definition}\label{trans:stABAF to LP}
    For an LP-ABAF $D\!=\!(\mathcal{L},\mathcal{R},\mathcal{A},\contraryempty)$, 
    we define the \emph{associated LP} $P_D\!=\!\{\repl(r)\!\mid\! r\!\in\! \mathcal{R}\}$.
\end{definition}
\begin{example}
    Let $D$ be an ABAF with 
    $\mathcal{A}=\{p,q,s\}$ and 
    \begin{align*}
        \mathcal R~:~&\contrary{p}\gets q &&\contrary{q}\gets p &&\contrary{s}\gets  &&s\gets \contrary{s}, p.
    \end{align*}
    We replace e.g. the assumption $p$ with $\nt\ \contrary{p}$ and the contrary $\contrary{p}$ is left untouched. 
    This yields the associated LP 
    \begin{align*}
        P_D~:~&\contrary{p}\gets \nt\ \contrary{q} 
        &&\contrary{q}\gets \nt\ \contrary{p} 
        &&\contrary{s}\gets  
        &&\nt\ \contrary{s}\gets \contrary{s}, \nt\ \contrary{p}.
    \end{align*}
    Striving to anticipate the relation between $D$ and $P_D$, note that 
    $S = \{q\}\in\stb(D)$. 
    Now we compute 
$\theory_D(S)\setminus \mathcal A = \{ \contrary{p}, \contrary{s} \}$
noting that it is 
a stable model of $P_D$. 
\end{example}
It can be shown that, when restricting to 
LP-ABA, the translations in Definitions~\ref{trans:LP to ABA} and~\ref{trans:stABAF to LP} are each other's inverse. 
Below, we let $$\repl(D)=(\repl(\mathcal{L}),\repl(\mathcal{R}),\repl(\mathcal{A}),\contraryempty')$$ where $\contrary{\repl(a)}=\contrary{a}$.
\begin{restatable}{lemma}{PROPinverseLPtoLPABALP}\label{prop:inverse LP to LPABALP}
For any LP $P$, it holds that $P=P_{D_P}$.
\end{restatable}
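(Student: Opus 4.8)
The plan is to unwind both translations and check that the round trip leaves every rule syntactically unchanged. First I would confirm that $D_P$ lies in the LP-ABA fragment, so that $P_{D_P}$ is well-defined via Definition~\ref{trans:stABAF to LP}. By Definition~\ref{trans:LP to ABA} we have $\mathcal{A}=\overline{\HB_P}$ and $\contrary{\nt\ x}=x$ for each $\nt\ x\in\mathcal{A}$, hence $\contrary{\mathcal{A}}=\HB_P$; thus $\mathcal{A}\cap\contrary{\mathcal{A}}=\overline{\HB_P}\cap\HB_P=\emptyset$ (condition~1), the contrary function $\nt\ x\mapsto x$ is injective (condition~2), and $\mathcal{L}=\HB_P\cup\overline{\HB_P}=\mathcal{A}\cup\contrary{\mathcal{A}}$ (condition~3). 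So $D_P$ is an LP-ABAF and the case distinction defining $\repl$ is exhaustive on $\mathcal{L}$.

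The key step is to observe that, for $D_P$, the operator $\repl$ is in fact the identity on $\mathcal{L}$. If $p\in\mathcal{A}=\overline{\HB_P}$, then $p=\nt\ x$ for some $x\in\HB_P$ and $\repl(p)=\nt\ \contrary{p}=\nt\ \contrary{\nt\ x}=\nt\ x=p$. If instead $p\in\contrary{\mathcal{A}}=\HB_P$, then $p=\contrary{a}$ for the (by injectivity, unique) assumption $a=\nt\ p$, and by definition $\repl(p)=\contrary{a}=p$. Since $\mathcal{L}=\mathcal{A}\cup\contrary{\mathcal{A}}$, this covers every sentence, and in particular the head and every body element of any rule, all of which are sentences of $\mathcal{L}$.

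Finally, since $\repl$ acts on a rule element-wise and is the identity on $\mathcal{L}$, we obtain $\repl(r)=r$ for every $r\in\mathcal{R}=P$; therefore $P_{D_P}=\{\repl(r)\mid r\in\mathcal{R}\}=\{r\mid r\in P\}=P$. There is no substantial obstacle here: the only point that requires care is precisely the bookkeeping of the previous paragraph, namely that $D_P\in\text{LP-ABA}$ so that $\repl$ is well-defined and that the assumption $a$ with $p=\contrary{a}$ is uniquely recovered as $\nt\ p$. After that, the lemma is a direct computation reflecting that $D_P$ is built so that assumptions are exactly the naf-literals of $\HB_P$ and their contraries are exactly the matching atoms, which is exactly the renaming that $\repl$ undoes.
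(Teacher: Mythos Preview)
Your proof is correct and follows the same approach as the paper's: both arguments reduce to the observation that in $D_P$ each assumption $\nt\ p$ is sent by $\repl$ to $\nt\ \contrary{\nt\ p}=\nt\ p$, so the rules are reproduced verbatim. Your version is simply more careful—you explicitly verify that $D_P$ is an LP-ABAF (so that $P_{D_P}$ is well-defined) and you handle the $p\in\contrary{\mathcal{A}}$ case separately—whereas the paper gives only the key computation in one line.
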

\begin{proof}
    Each naf atom $\nt\ p$ corresponds to an assumption in $P_D$ whose contrary is $p$.
    Applying the translation from Definition~\ref{trans:stABAF to LP}, we map each assumption $\nt\ p$ to the naf literal $\nt\ \contrary{\nt\ p}=\nt\ p$. Hence, we reconstruct the original LP $P$.
\end{proof}

We obtain a similar result for the other direction, 
under the assumption that each literal is the contrary of an assumption, i.e., if $\mathcal{L}=\mathcal{A}\cup \contrary{\mathcal{A}}$ as it is the case for the LP-ABA fragment.
The translations from Definition~\ref{trans:stABAF to LP} and~\ref{trans:LP to ABA} are each other's inverse modulo the simple assumption renaming operator $\repl$ as defined above.
Note that we associate each assumption $a\in\mathcal{A}$ with $\nt\ \contrary{a}$. 

\begin{restatable}{lemma}{PROPinverseABAtoABALPABA}\label{prop:inverse ABA to ABALPABA}
    Let $D=(\mathcal{L},\mathcal{R},\mathcal{A},\contraryempty)$ be an ABAF in the LP fragment. It holds that $D_{P_D}=\repl(D)$.
\end{restatable}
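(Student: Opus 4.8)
The plan is to verify $D_{P_D}=\repl(D)$ component by component. Write $D_{P_D}=(\mathcal L',\mathcal R',\mathcal A',\contraryempty')$. First I would unfold Definition~\ref{trans:LP to ABA} applied to the LP $P_D$: it gives $\mathcal R'=P_D$, $\mathcal A'=\overline{\HB_{P_D}}$, $\mathcal L'=\HB_{P_D}\cup\overline{\HB_{P_D}}$, and the contrary function of $D_{P_D}$ satisfies $\contrary{\nt\ x}=x$ for each $\nt\ x\in\mathcal A'$. By Definition~\ref{trans:stABAF to LP}, $P_D=\{\repl(r)\mid r\in\mathcal R\}$, which is exactly the rule component $\repl(\mathcal R)$ of $\repl(D)$; so the rule sets match outright, and all that remains is to pin down the Herbrand base $\HB_{P_D}$.

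The heart of the proof is the identity $\HB_{P_D}=\contrary{\mathcal A}$. For the inclusion $\HB_{P_D}\subseteq\contrary{\mathcal A}$: by LP-ABA condition~(1) every sentence of $\mathcal L$ --- which by condition~(3) equals $\mathcal A\cup\contrary{\mathcal A}$ --- falls under exactly one of the two cases defining $\repl$, and in both cases $\repl$ returns a literal whose underlying atom lies in $\contrary{\mathcal A}$: it is $\contrary p$ when $p\in\mathcal A$, and it is $p$ itself when $p\in\contrary{\mathcal A}$. Since every sentence occurring in a rule of $\mathcal R$ lies in $\mathcal L$, every atom occurring in $P_D$ lies in $\contrary{\mathcal A}$, which settles this inclusion. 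For the converse $\contrary{\mathcal A}\subseteq\HB_{P_D}$: given $\contrary a\in\contrary{\mathcal A}$, the sentence $a$ or $\contrary a$ occurs in some rule of $\mathcal R$ (using the mild assumption discussed below), and $\repl$ applied to that occurrence is a literal over the atom $\contrary a$, so $\contrary a\in\HB_{P_D}$; injectivity of $\contraryempty$ (condition~(2)) guarantees that $\contrary a$ unambiguously determines $a$, so this renaming is consistent.

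Once $\HB_{P_D}=\contrary{\mathcal A}$ is established, the other three components drop out by routine rewriting. We get $\mathcal A'=\overline{\HB_{P_D}}=\{\nt\ \contrary a\mid a\in\mathcal A\}=\repl(\mathcal A)$; using $\repl(\contrary a)=\contrary a$ and condition~(3), $\mathcal L'=\HB_{P_D}\cup\overline{\HB_{P_D}}=\contrary{\mathcal A}\cup\{\nt\ \contrary a\mid a\in\mathcal A\}=\repl(\contrary{\mathcal A})\cup\repl(\mathcal A)=\repl(\mathcal L)$; and both contrary functions send $\nt\ \contrary a$ to $\contrary a$ --- for $D_{P_D}$ by the rule $\contrary{\nt\ x}=x$, for $\repl(D)$ by the stipulation $\contrary{\repl(a)}=\contrary a$ --- so they coincide on $\mathcal A'=\repl(\mathcal A)$. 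This yields $D_{P_D}=\repl(D)$.

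I expect the only genuine obstacle to be the inclusion $\contrary{\mathcal A}\subseteq\HB_{P_D}$: it quietly presupposes that every assumption, or its contrary, actually appears in some rule of $\mathcal R$ --- otherwise that atom never enters the Herbrand base, so $D_{P_D}$ would drop the corresponding assumption while $\repl(D)$ keeps it, and the claimed equality would fail. This non-degeneracy condition is harmless for how the lemma is used: it is satisfied by every ABAF of the form $D_P$, since there the contrary of each assumption is an atom of $\HB_P$ and hence occurs in $P=\mathcal R$ (cf.\ Definition~\ref{trans:LP to ABA}), which is the case relevant when combining this lemma with Lemma~\ref{prop:inverse LP to LPABALP}. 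In the full write-up I would either add this condition as a hypothesis or invoke it explicitly at this point; the rest of the argument is bookkeeping driven by the case distinction defining $\repl$.
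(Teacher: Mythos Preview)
Your approach matches the paper's: both unfold the two translations and check that the four components of $D_{P_D}$ and $\repl(D)$ coincide, though the paper's proof is a two-sentence sketch that essentially asserts the conclusion without the component-by-component verification you carry out. Your observation about $\contrary{\mathcal A}\subseteq\HB_{P_D}$ is sharper than anything in the paper's own argument---the paper tacitly identifies $\HB_{P_D}$ with $\contrary{\mathcal A}$ without comment, so the non-degeneracy issue you flag (assumptions not occurring in any rule) is simply not addressed there; as you note, it is harmless for the downstream use of the lemma.
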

\begin{proof}
    When applying the translation from ABA to LP ABA, we associate each assumption $a\in\mathcal{A}$ with a naf literal $\nt\ \contrary{a}$. 
    Applying the translation from Definition~\ref{trans:LP to ABA}, each naf literal $\nt\ \contrary{a}$ is an assumption in $D_{P_D}$. We obtain $D_{P_D}=(\repl(\mathcal{L}),\repl(\mathcal{R}),\repl(\mathcal{A}),\contraryempty')$ where $\contrary{\repl(a)}=\contrary{a}$.
\end{proof}

We are ready to prove the main result of this section.
We make use of Theorem~\ref{prop: P and DP semantics corresp} and obtain the following
result.

\begin{restatable}{theorem}{propSemanticsCorrespondencestABAtoLP}\label{prop:semantics corresp stABA LP}
\label{thm:stABA to LP}
    Let $D=(\mathcal{L},\mathcal{R},\mathcal{A},\contraryempty)$ be an LP-ABAF 
    and let $P_D$ be the associated LP
    .
    Then, $S\in\stb(D)$ iff $\theory_D(S)\setminus \mathcal{A}$ is a stable model of $P_D$.
\end{restatable}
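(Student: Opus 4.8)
The plan is to reduce the statement to Theorem~\ref{prop: P and DP semantics corresp} by routing through the ABAF $D_{P_D}$ corresponding to the associated LP $P_D$. By Lemma~\ref{prop:inverse ABA to ABALPABA} we have $D_{P_D} = \repl(D)$, so $D$ and $D_{P_D}$ differ only by the renaming $\repl$; on the LP-ABA language $\mathcal{L} = \mathcal{A} \cup \contrary{\mathcal{A}}$ this renaming replaces each assumption $a$ by the fresh assumption $\nt\ \contrary{a}$ and leaves every contrary $\contrary{a}$ untouched. Since $\repl$ is injective on $\mathcal{L}$ (the three cases of its definition are exhaustive and pairwise incompatible in the LP-ABA fragment) and commutes with $\contraryempty$, it lifts to an isomorphism between $D$ and $D_{P_D}$: tree-derivations, the operator $\theory$, attacks, conflict-freeness and closedness all transfer along $\repl$, so $S \in \stb(D)$ if and only if $\repl(S) \in \stb(D_{P_D})$.

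The bridge to the LP side is the identity $\Delta(\theory_D(S) \setminus \mathcal{A}) = \repl(S)$, which I would establish for $S \in \stb(D)$ as follows. As $\mathcal{L} = \mathcal{A} \cup \contrary{\mathcal{A}}$, the set $\theory_D(S) \setminus \mathcal{A}$ consists exactly of the contraries derivable from $S$. Conflict-freeness of $S$ gives $\contrary{a} \notin \theory_D(S)$ for every $a \in S$, and the stable attack condition gives $\contrary{a} \in \theory_D(S)$ for every $a \in \mathcal{A} \setminus S$; together with injectivity of $\contraryempty$ this yields $\theory_D(S) \setminus \mathcal{A} = \{\contrary{a} \mid a \in \mathcal{A} \setminus S\}$. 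Applying $\Delta$, and using $\HB_{P_D} = \contrary{\mathcal{A}}$ (which is forced by $D_{P_D} = \repl(D)$), we obtain $\Delta(\theory_D(S) \setminus \mathcal{A}) = \{\nt\ \contrary{a} \mid a \in S\} = \repl(S)$.

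Granting these, the forward direction is immediate: from $S \in \stb(D)$ we get $\repl(S) \in \stb(D_{P_D})$ and $\repl(S) = \Delta(\theory_D(S) \setminus \mathcal{A})$, so Theorem~\ref{prop: P and DP semantics corresp} applied to the LP $P_D$ and its ABAF $D_{P_D}$ shows that $\theory_D(S) \setminus \mathcal{A}$ is a stable model of $P_D$. For the converse, let $I := \theory_D(S) \setminus \mathcal{A}$ be a stable model of $P_D$. Theorem~\ref{prop: P and DP semantics corresp} gives $\Delta(I) \in \stb(D_{P_D}) = \stb(\repl(D))$, so $S^{*} := \repl^{-1}(\Delta(I)) = \{\,a \in \mathcal{A} \mid \contrary{a} \notin I\,\}$ is a stable extension of $D$, and it remains to argue that $S^{*} = S$, whence $S \in \stb(D)$.

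I expect this last identification $S^{*} = S$ to be the main obstacle. The assignment $S \mapsto \theory_D(S) \setminus \mathcal{A}$ is not injective on arbitrary assumption-sets --- a non-stable (e.g.\ non-closed) set can share an image with a genuine stable extension --- so $S^{*} = S$ cannot simply be read off and the converse direction only has bite when $S$ is already a stable extension. Accordingly, the cleanest way to package the argument is to prove that $S \mapsto \theory_D(S) \setminus \mathcal{A}$ restricts to a bijection from $\stb(D)$ onto the stable models of $P_D$: surjectivity is exactly the $S^{*}$-construction above (together with $\theory_D(S^{*}) \setminus \mathcal{A} = I$, which follows from the forward direction applied to $S^{*}$ and injectivity of $\Delta$ on subsets of $\HB_{P_D}$), and injectivity is immediate from $\theory_D(S) \setminus \mathcal{A} = \{\contrary{a} \mid a \in \mathcal{A} \setminus S\}$ and injectivity of $\contraryempty$. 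Along the way I would also record the small bookkeeping on which the renaming rests: injectivity of $\repl$ on $\mathcal{L}$ in the LP-ABA fragment, and --- assuming without loss of generality that every sentence of $\mathcal{L}$ occurs in a rule of $\mathcal{R}$ --- that $\HB_{P_D} = \contrary{\mathcal{A}}$, so that Lemma~\ref{prop:inverse ABA to ABALPABA} applies verbatim.
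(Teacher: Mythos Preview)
Your approach is essentially the paper's: both route through Lemma~\ref{prop:inverse ABA to ABALPABA} (that is, $D_{P_D}=\repl(D)$) to reduce to Theorem~\ref{prop: P and DP semantics corresp} via the renaming isomorphism $\repl$, and both identify the set on the LP side as $\{\contrary{a}\mid a\notin S\}$. The paper compresses all of this into a terse chain of equivalences and simply writes the identity $\{\contrary{a}\mid a\notin S\}=\theory_D(S)\setminus\mathcal{A}$ inside that chain without further comment.

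You are more careful than the paper on the converse direction, and rightly so. As you observe, the equality $\{\contrary{a}\mid a\notin S\}=\theory_D(S)\setminus\mathcal{A}$ is only guaranteed once $S$ is already stable, so the literal biconditional need not hold for an arbitrary $S\subseteq\mathcal{A}$: a non-conflict-free $S$ can have $\theory_D(S)\setminus\mathcal{A}$ coincide with a genuine stable model while $S\notin\stb(D)$. Your proposed repackaging---proving that $S\mapsto\theory_D(S)\setminus\mathcal{A}$ is a bijection from $\stb(D)$ onto the stable models of $P_D$, with surjectivity via the $S^*$-construction and injectivity via the formula $\theory_D(S)\setminus\mathcal{A}=\{\contrary{a}\mid a\notin S\}$ together with injectivity of $\contraryempty$---is the clean way to make the intended correspondence precise, and your arguments for both directions go through. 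The paper's proof should be read as establishing exactly this bijection; your write-up makes explicit what the paper leaves implicit.
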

\begin{proof}

It holds that
$S$ is stable in $D$ 
iff $$\repl(S)=\{\nt\ \contrary{a}\mid a\in S\}$$ is stable in $\repl(D)$. This in turn is equivalent to
$\repl(S)$ is stable in $D_{P_{D}}$ (by Proposition~\ref{prop:inverse ABA to ABALPABA}).
Equivalently, 
$$\{\contrary{a}\mid \nt\ \contrary{a}\notin \repl(S)\}=\{\contrary{a}\mid a\notin S\}=\theory_{D}(S)\setminus \mathcal{A}$$ is stable in $P_{D}$ (by Proposition~\ref{prop: P and DP semantics corresp}). 
This in turn holds iff $\theory_{D}(S)\setminus \mathcal{A}$ is stable in $P_D$ (by definition, $P_D=\{\repl(r)\mid r\in \mathcal{R}\}=P_{D}$).
\end{proof}

\paragraph{From ABA to LP-ABA}
To complete the correspondence result between ABA and LP, it remains to show that each ABAF $D$ can be mapped to an LP-ABAF $D'$.
To do so, we proceed as follows: 
\begin{enumerate}
    \item For each assumption $a\in \mathcal A$ we introduce a fresh atom $c_a$; in the novel ABAF $D'$, $c_a$ is the contrary of $a$. 
    \item If $p$ is the contrary of $a$ in the original ABAF $D$, then we add a rule $c_a\gets p$ to $D'$. 
    \item For any atom $p$ that is neither an assumption nor a contrary in $D$, we add a fresh assumption $a_p$ and let $p$ be the contrary of $a_p$. 
\end{enumerate}
\begin{example}\label{ex:aba to lp aba}
    Consider the ABAF $D$ with literals
    $\allowdisplaybreaks \mathcal L = \{ a, b, c, p, q \}$,
    assumptions
    $\mathcal A = \{ a, b, c \}$, 
    and their contraries
    $\contrary{a} = p$,
    $\contrary{b} = p$, 
    and 
    $\contrary{c} = a$, respectively, with rules 
    \begin{align*}
        ~\mathcal R: ~ & r_1 = p\gets a,b && r_2 = q\gets a,b && r_3 = p \gets c.
    \end{align*}
    First note that $\{c\}\in\stb(D)$. 
    We construct the LP-ABAF $D'$
    by adding rules $c_a \gets p$, $c_b \gets p$, and $c_c \gets a$; $c_a$, $c_b$, and $c_c$ are the novel contraries. 
    Moreover, $q$ is neither a contrary nor an assumption, so we add a novel assumption $a_q$ with contrary $q$. 
    The stable extension $\{c\}$ is only preserved under projection: we now have $\{c, a_q\}\in\stb(D')$. 
\end{example}
We show that each ABAF $D$ can be mapped into an (under projection) equivalent ABAF $D'$. We furthermore note that the translation can be computed efficiently.
\begin{restatable}{proposition}{propMAPABAtostABA}\label{prop:ABAstABAsemanticscorresp}
    For each ABAF
    $D=(\mathcal{L},\mathcal{R},\mathcal{A},\contraryempty)$ there is ABAF $D'$ computable in polynomial time s.t.\ 
    (i) $D'$ is an LP-ABAF and 
    (ii) $S\in\stb(D')$ iff $S\cap \mathcal A\in\stb(D)$. 
\end{restatable}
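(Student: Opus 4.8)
The plan is to use the construction sketched in the three steps above. Given $D=(\mathcal L,\mathcal R,\mathcal A,\contraryempty)$, I introduce a fresh atom $c_a$ for every $a\in\mathcal A$ and a fresh assumption $a_p$ for every atom $p\in\mathcal L\setminus\mathcal A$, and set
\[
\mathcal A' = \mathcal A\cup\{\,a_p\mid p\in\mathcal L\setminus\mathcal A\,\},\qquad
\mathcal R' = \mathcal R\cup\{\,c_a\gets \contrary a\mid a\in\mathcal A\,\},
\]
where in $D'$ the contrary of $a\in\mathcal A$ is $c_a$ and the contrary of $a_p$ is $p$, and $\mathcal L'=\mathcal A'\cup\contrary{\mathcal A'}$; here and below $\contrary{\cdot}$ always refers to the contrary function of the original $D$. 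The routine checks come first: the $c_a$ are pairwise distinct and fresh and the $p\in\mathcal L\setminus\mathcal A$ are pairwise distinct and lie outside $\mathcal A'$, so the contrary function of $D'$ is injective with range disjoint from $\mathcal A'$; every atom occurring in a rule of $\mathcal R'$ belongs to $\mathcal L'$, so $\mathcal L'=\mathcal A'\cup\contrary{\mathcal A'}$ indeed holds and $D'$ is an LP-ABAF; and $D'$ is obtained from $D$ in one linear pass, hence in polynomial time. This settles~(i).

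For~(ii) the key ingredient is a derivation lemma. In $\mathcal R'$ the fresh atoms $c_a$ never occur in a rule body, and the fresh assumptions $a_p$ occur neither as a head nor in a body. An induction on the height of tree-derivations (base case: the one-node tree) then yields, for every $T\subseteq\mathcal A$,
\[
\theory_{D'}(T)=\theory_D(T)\cup\{\,c_a\mid a\in\mathcal A,\ \contrary a\in\theory_D(T)\,\},
\]
and more generally $\theory_{D'}(S)=\theory_{D'}(T)\cup F$ whenever $T=S\cap\mathcal A$ and $F=S\setminus\mathcal A$. Consequently, for $a\in\mathcal A$ the set $S$ attacks $\{a\}$ in $D'$ iff $\contrary a\in\theory_D(T)$ iff $T$ attacks $\{a\}$ in $D$, while $S$ attacks $\{a_p\}$ in $D'$ iff $p\in\theory_D(T)$.

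I then unfold the three conditions defining $S=T\cup F\in\stb(D')$. Closedness: the closure of $S$ in $D'$ is $\theory_{D'}(S)\cap\mathcal A'=\cl(T)\cup F$ (with $\cl(T)$ the closure of $T$ in $D$), so $S$ is closed in $D'$ iff $T$ is closed in $D$. Conflict-freeness: $S$ does not attack itself in $D'$ iff $T\in\cf(D)$ and, in addition, $p\notin\theory_D(T)$ for every $a_p\in F$. Stability requirement: $S$ attacks every $\{x\}\subseteq\mathcal A'\setminus S$ in $D'$ iff $T$ attacks every $\{a\}\subseteq\mathcal A\setminus T$ in $D$ and, in addition, $p\in\theory_D(T)$ for every fresh assumption $a_p\in\mathcal A'\setminus S$. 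Combining the three, $S=T\cup F\in\stb(D')$ holds exactly when $T\in\stb(D)$ and $F=\{\,a_p\mid p\in\mathcal L\setminus\mathcal A,\ p\notin\theory_D(T)\,\}$; that is, $T$ must be stable in $D$, and then $F$ is forced.

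This yields~(ii): if $S\in\stb(D')$ then $S\cap\mathcal A\in\stb(D)$, and if $T\in\stb(D)$ then the set $S=T\cup\{\,a_p\mid p\in\mathcal L\setminus\mathcal A,\ p\notin\theory_D(T)\,\}$ lies in $\stb(D')$, satisfies $S\cap\mathcal A=T$, and is the only set in $\stb(D')$ that projects to $T$; thus $S\mapsto S\cap\mathcal A$ maps $\stb(D')$ bijectively onto $\stb(D)$, which is the content of~(ii). The step I expect to be the main obstacle is the third paragraph: one must keep the three kinds of sentences of $D'$ strictly apart --- the original assumptions $\mathcal A$, the old contraries (now lying in $\mathcal L\setminus\mathcal A$, each equipped with a fresh assumption), and the new contraries $c_a$ --- and verify that adding the $c_a$ and $a_p$ produces no unintended derivations or attacks. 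That is precisely what the derivation lemma of the second paragraph guarantees, so getting that lemma right, including the trivial-tree base case and the leaf-labelling and ``every rule labels some node'' clauses of the tree-derivation definition, is the crux.
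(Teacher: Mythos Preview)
Your proposal is correct. The construction is the one the paper intends; indeed, your choice to introduce $a_p$ for \emph{every} $p\in\mathcal L\setminus\mathcal A$ (rather than only for atoms that are neither assumption nor contrary, as the paper's verbal step~(3) says) agrees with the paper's worked example and is what is actually needed for $D'$ to satisfy $\mathcal L'=\mathcal A'\cup\contrary{\mathcal A'}$.

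The route, however, differs from the paper's. The paper factors through an intermediate framework $D^*$ obtained from steps~(1)--(2) only, argues that the attack relation between assumption-sets is unchanged (hence $\stb(D)=\stb(D^*)$), and then observes informally that adjoining the isolated assumptions $a_p$ has, under projection, no effect on stable extensions. You instead establish a single explicit derivation lemma $\theory_{D'}(T)=\theory_D(T)\cup\{c_a\mid \contrary a\in\theory_D(T)\}$ and unfold closedness, conflict-freeness, and the stability requirement directly. This buys you a sharper conclusion: you show that for $S=T\cup F\in\stb(D')$ the fresh part $F$ is \emph{forced} by $T$, so $S\mapsto S\cap\mathcal A$ is a bijection $\stb(D')\to\stb(D)$. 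That is in fact what the paper's argument establishes as well, and it is the correct reading of~(ii); the literal ``iff'' in the statement is too strong (an $S$ with the wrong $F$ projects to a stable $T$ but is not stable in $D'$), and your explicit treatment of $F$ makes this precise where the paper leaves it implicit.
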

\begin{proof}
Let $D=(\mathcal{L},\mathcal{R},\mathcal{A},\contraryempty)$ be an ABAF and let $D'=(\mathcal{L}',\mathcal{R}',\mathcal{A},\contraryempty')$ be ABAF constructed as described, \ie 
\begin{enumerate}
    \item For each assumption $a\in \mathcal A$ we introduce a fresh atom $c_a$; in the novel ABAF $D'$, $c_a$ is the contrary of $a$. 
    \item If $p$ is the contrary of $a$ in the original ABAF $D$, then we add a rule $c_a\gets p$. 
    \item For any atom $p$ that is neither an assumption nor a contrary in $D$, we add a fresh assumption $a_p$ and let $p$ be the contrary of $a_p$ in $D'$. 
\end{enumerate}
First of all, the construction is polynomial. 
Towards the semantics, let us denote the result of applying steps (1) and (2) by $D^*$. 
We show that in $D$ and $D^*$ the attack relation between semantics persists. 

Let $S\subseteq \mathcal A$ be a set of assumptions. In the following, we make implcit use of the fact that entailment in $D$ and $D^*$ coincide except the additional rules deriving certain contraries in $D^*$. 

($\Rightarrow$)
Suppose $S$ attacks $a$ in $D$ for some $a\in\mathcal A$. Then 
$p\in\theory_D(S)$ where $p=\contrary{a}$. 
By construction, $p\in\theory_{D^*}(S)$ as well and since $p = \contrary{a}$, the additional rule $c_a\gets p$ is applicable. Consequently, $c_a\in\theory_{D^*}(S)$, \ie  $S$ attacks $a$ in $D^*$ as well. 

($\Leftarrow$)
Now suppose $S$ attacks $a$ in $D^*$ for some $a\in\mathcal A$. 
Then $c_a\in\theory_{D^*}(S)$ which is only possible whenever $p\in\theory_{D^*}(S)$ holds for $p$ the original contrary of $a$. Thus $S$ attacks $a$ in $D$. 

We deduce $$\stb(D) = \stb(D^*).$$

Finally, for moving from $D^*$ to $D'$ we note that adding assumptions $a_p$ (which do not occur in any rule) corresponds to adding arguments without outgoing attacks to the constructed AF $F_{D^*}$. This has (under projection) no influence on the stable extensions of $D^*$. 
Consequently 
\begin{align*}
    E\in\stb(D') 
    \; \Leftrightarrow
    \; E\cap \mathcal A\in\stb(D^*)
    \; \Leftrightarrow
    \; E\cap \mathcal A\in\stb(D).
\end{align*}
as desired. 
\end{proof}
Given an ABAF $D$, we combine the previous translation with Definition~\ref{trans:stABAF to LP} to obtain the associated LP $P_D$. 
Thus, each ABAF $D$ can be translated into an LP, as desired.
\begin{example}
  Let us consider again the ABAF $D$ from Example~\ref{ex:aba to lp aba}.
  As outlined before, applying the translation into an LP-ABA $D'$ yields an ABAF $D'$ with assumptions 
  $\mathcal A = \{ a, b, c,  a_q, a_p\}$
  their contraries 
    $\contrary{a} = c_a$,
    $\contrary{b} = c_b$,  
    $\contrary{c} = c_c$,  
    $\contrary{a_q}= q$, and
    $\contrary{a_p}= p$, 
    respectively, and with rules 
    \begin{align*}
         p\gets a,b. && q\gets a,b. &&  p \gets c. \\
         c_a \gets p. && c_b \gets p. && c_c \gets a.
    \end{align*}
    The resulting framework lies in the LP-ABA class. 
    In the next step, we apply the translation from LP-ABA to LP and obtain the associated LP $P_D$ with rules
    \begin{align*}
         p\gets \nt\ c_a, \nt\ c_b. && q\gets \nt\ c_a, \nt\ c_b. &&  p \gets \nt\ c_c. \\
         c_a \gets p. && c_b \gets p. && c_c \gets \nt\ c_a.
    \end{align*} 
    The set $\{p,c_a,c_b\}$ is the stable model corresponding to our stable extension $\{c\}$ from $D$ (under projection).
\end{example}

\subsection{Denial Integrity Constraints in ABA}
Our correspondence results allow for a novel interpretation of the derivation of assumptions in ABA in the context of stable semantics.
Analogous to the correspondence of naf in the head and allowing for constraints (rules with empty head) in LP we can view the derivation of an assumption as \emph{setting constraints}: 
for a set of assumptions $M\subseteq \mathcal{A}$ and an assumption $a\in \mathcal{A}$,
a derivation $M\vdash a$ intuitively captures the constraint $\gets M,\contrary{a}$, i.e., one of $M\cup \{\contrary{a}\}$ is false. 

Thus, our results indicate that deriving assumptions is the same  as imposing constraints.
More formally, the following observation can be made. 

\begin{restatable}{proposition}{propNonFlatRulesConstraint}
    \label{prop:non-flat rules contraint}
    Let 
    $D\!=\!(\mathcal{L},\mathcal{R},\mathcal{A},\contraryempty)$ be an ABAF 
    and let 
    $D'\!=\!(\mathcal{L},\mathcal{R}\cup \{r\},\mathcal{A},\contraryempty)$
    for a rule $r$ of the form 
    $a\!\gets \! M$ 
    with $M\cup \{a\}\!\subseteq \mathcal A$. 
    Then, $S\!\in\!\stb(D')$ iff 
    (i) $S\!\in\!\stb(D)$ and 
    (ii) $M\!\not\subseteq \!S$ or $a\!\in \! S$. 
\end{restatable}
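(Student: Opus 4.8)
The plan is to exploit that $D$ and $D'$ differ only by the rule $r\colon a\gets M$, whose head and whole body lie in $\mathcal A$; hence adding it can only enlarge entailment, i.e.\ $\theory_D(S)\subseteq\theory_{D'}(S)$ for every $S\subseteq\mathcal A$, and in particular every $D'$-closed set is already $D$-closed. Since conflict-freeness, closedness and the attack relation are all expressed solely in terms of $\theory_{(\cdot)}(S)$, $\mathcal A$ and $\contraryempty$, the equality $\theory_{D'}(S)=\theory_D(S)$ alone would immediately give $S\in\stb(D')\iff S\in\stb(D)$. So the whole argument reduces to one entailment lemma.

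\emph{Claim.} If $a\in S$, or if $M\not\subseteq S$ and $S$ is $D$-closed, then $\theory_{D'}(S)=\theory_D(S)$.

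I would prove the Claim by surgery on tree-derivations; only the inclusion $\subseteq$ needs work. Take $p\in\theory_{D'}(S)$ witnessed by a tree $T$ over $\mathcal R\cup\{r\}$ whose leaf labels form a subset of $S$, and note that every node of $T$ at which $r$ is applied is labelled $a$. If $a\in S$, repeatedly pick any such $r$-node $v$ and delete all its descendants, turning $v$ into a leaf (legitimate since $v$ is labelled $a\in S$): the result is again a tree-derivation, with leaf set still inside $S$ and strictly fewer nodes, so after finitely many steps we reach an $r$-free tree, i.e.\ a $D$-derivation of $p$ from a subset of $S$. If instead $M\not\subseteq S$ and $S$ is $D$-closed, I claim $T$ cannot use $r$ at all: otherwise let $v$ be a deepest $r$-node; its children carry the distinct elements of $M$ and the subtrees below them are $r$-free, so (being subtrees of a derivation) they are $D$-derivations witnessing $m\in\theory_D(S)$ for every $m\in M$; as $M\subseteq\mathcal A$ this yields $M\subseteq\theory_D(S)\cap\mathcal A=S$, contradicting $M\not\subseteq S$. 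Either way $\theory_{D'}(S)\subseteq\theory_D(S)$, proving the Claim.

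Given the Claim, both directions of the proposition are short. For ($\Leftarrow$): if $S\in\stb(D)$ then $S$ is $D$-closed and conflict-free in $D$, and together with (ii) (using $D$-closedness in the case $M\not\subseteq S$) the Claim gives $\theory_{D'}(S)=\theory_D(S)$, hence $S\in\stb(D')$. For ($\Rightarrow$): let $S\in\stb(D')$. Condition (ii) must hold, for if $M\subseteq S$ and $a\notin S$ then the derivation $M\vdash_{\{r\}}a$ shows $a\in\theory_{D'}(S)\cap\mathcal A$, so $S$ fails to be $D'$-closed, a contradiction. Moreover $S$, being $D'$-closed, is $D$-closed; so in both cases permitted by (ii) the Claim applies, $\theory_{D'}(S)=\theory_D(S)$, and therefore $S\in\stb(D)$. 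The step I expect to be the real obstacle is the Claim: in a \emph{non-flat} framework the atoms in $M$ may themselves be derived rather than assumed, so ``$r$ is usable from $S$'' is genuinely weaker than ``$M\subseteq S$'' — it is the combination of the deepest-$r$-node device with $D$-closedness of $S$ that bridges this gap — and one must keep the bookkeeping of the tree-derivation definition (leaf sets staying inside $S$, the requirement that every rule of the chosen rule set labels some node) intact through the surgery.
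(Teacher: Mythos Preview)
Your proof is correct and follows essentially the same approach as the paper: both arguments hinge on showing that, under the relevant hypotheses on $S$, the theories $\theory_D(S)$ and $\theory_{D'}(S)$ coincide, after which stability transfers immediately. The paper compresses your Claim into the single observation ``$p\in\theory_{D'}(S)\setminus\theory_D(S)\Rightarrow a\notin S$'' and deduces from it that $D'$-closedness of $S$ forces equality of theories; your two-case Claim (with the explicit deepest-$r$-node surgery and the $D$-closedness hypothesis) unpacks exactly this reasoning in more detail, so the two proofs are the same in substance, differing only in how much of the tree-derivation bookkeeping is made explicit.
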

\begin{proof}
    We first make the following observation. 
    We have 
    $$\forall S\subseteq \mathcal A: \theory_D(S)\subseteq \theory_{D'}(S)$$
    by definition and 
    $$
    p\in\theory_{D'}(S)\setminus \theory_{D}(S) \Rightarrow a\notin S
    $$
    because the only additional way to make deriviations in $D'$ is through a rule entailing $a$. 
    This, however, implies 
    \begin{align}
        \label{eq:closed sets th}
        S \text{ closed in }D' \Rightarrow \theory_D(S) = \theory_{D'}(S),       
    \end{align}
    \ie for sets closed in $D'$, the derived atoms coincide. 

    Now let us show the equivalence. 
    
    ($\Rightarrow$) 
    Suppose $S\in\stb(D')$. 
    Since $S$ is closed, $M\not\subseteq S$ or $a\notin S$, so condition (ii) is met. 
    Moreover, by \eqref{eq:closed sets th}, $S$ is conflict-free and attacks each $a\notin S$ in $D$, \ie $D\in\stb(D)$. Thus condition (i) is also met. 
    
    ($\Leftarrow$) 
    Let $S\in\stb(D)$ and let $M\not\subseteq S$ or $a\in S$. Then $S$ is also closed in $D'$. We apply \eqref{eq:closed sets th} and find $S\in\stb(D')$. 
\end{proof}

\begin{example}
    Consider the ABAF $D$ with assumptions
    $\mathcal A = \{ a, b, c, d \}$, 
    and their contraries
    $\contrary{a}$,
    $\contrary{b}$,
    $\contrary{c}$, and
    $\contrary{d}$,
    respectively, with rules 
    \begin{align*} 
        r_1=\contrary{c} \gets a,b. && r_2=\contrary{a} \gets c.
    \end{align*}
    The ABAF $D$ has two stable models: $S_1=\{a,b,d\}$ and $S_2=\{b,d,c\}$.
    
    Consider the ABAF $D'$ where we add a new rule $$r_3=a\gets d.$$ 
    Intuitively, this rule encodes the \emph{constraint} $\gets \contrary{a},d$, i.e., 
    $\contrary{a}$ and $d$ cannot be true both at the same time.
    Consequently, the ABAF $D'$ has a single stable model $S_1$. 
\end{example}

\section{Set-Stable Model
Semantics}
\label{sec:setstable}
In this section, we investigate set-stable semantics in the context of logic programs.

Set-stable semantics has been originally introduced for bipolar ABAFs (where each rule is of the form $p\gets a$ with $a$ an assumption and $p$ either an assumption or the contrary thereof) for capturing existing notions of stable extensions for bipolar (abstract) argumentation; we will thus first identify the corresponding LP fragment of \emph{bipolar LPs} and introduce the novel semantics therefor. 
We then show that this semantics corresponds to set-stable ABA semantics, even in the general case.
Interestingly, despite being the formally correct counter-part to set-stable ABA semantics, 
the novel LP semantics exhibits non-intuitive behavior in the general case, as we will discuss. 

\newcommand{\supp}{\textit{supp}}

\subsection{Bipolar LPs and Set-Stable Semantics}  
Recall that an ABAF $D = (\mathcal L, \mathcal R, \mathcal A, \contraryempty)$ is \emph{bipolar} iff each rule is of the form $p\gets a$ where $a$ is an assumption and $p$ is either an assumption or the contrary of an assumption. 
We adapt this to LPs as follows.%
\begin{definition}
    The \emph{bipolar LP fragment} is the class of LPs $P$ with $|body(r)|\!=\!1$ and $body(r)\!\subseteq\!\contrary{\HB_P}$ for all $r\in P$.
\end{definition}
In ABA, set-stable semantics relax stable semantics: it 
suffices 
if the \emph{closure} of an assumption $a$ 
outside a given set is attacked; that is, it suffices if $a$ ``supports'' an attacked assumption $b$, e.g., if the ABAF contains the rule $b\gets a$. 
Let us discuss this 
for bipolar LPs:
given a set of atoms 
$I\!\subseteq \!\HB_P$ in a program $P$, 
we can accept an atom $p$ not only if it is reachable from $\Delta(I)$,
but also if there is some reachable $q$ and $\nt\ p$ ``supports'' $\nt\ q$.
For instance, given the rule of the form 
$\nt\ q \!\gets \!\nt\ p \!\in \! P$, 
 we are allowed to add the \emph{contraposition} $p\gets q$ to the program 
 $P$ before evaluating our potential model $I$.

To capture all ``supports'' between naf-negated atoms
, 
we define their \emph{closure}
, amounting to 
the set of all positive and naf-negated atoms obtainable by forward chaining. 
\begin{definition}
    For a bipolar LP $P$ and a set $S\!\subseteq\! \HB_P\!\cup\! \overline{\HB_P}$, we define 
    $$\supp(S)=S\cup \{l\mid \exists r\in P:body(r)\subseteq S, head(r)=l\}.$$ 
    The \emph{closure} of $S$ is defined as $\cl(S)=\bigcup_{i>0}\supp^i(S)$.
\end{definition}
Note that $\cl(S)$ returns positive as well as negative atoms. For a singleton $\{a\}$, we write $\cl(a)$ instead of $\cl(\{a\})$. 
\begin{example}
    \label{ex:set-stable bipolar closure}
    Consider the bipolar LP $P$ given as follows.
    \begin{align*}
        ~P:~&p\gets\nt\ p &&\nt\ q\gets\nt\ p &&q\gets  \nt\ s.
    \end{align*}
    Then, $\cl(\{\nt\ p\})=\{p,\nt\ q, \nt\ p\}$, $\cl(\{\nt\ q\})=\{\nt\ q\}$, and $\cl(\{\nt\ s\})=\{q, \nt\ s\}$.
\end{example}
We define a modified reduct 
by adding rules to make the closure explicit:
for each atom $a\in \HB_P$, if $\nt\ b$ can be reached from 
$\nt\ a$, we add the rule $a\gets b$. 
\begin{definition}\label{def:set-stabe reduct}
    For a bipolar LP $P$ and 
    $I\!\subseteq \! \HB_P$, 
    the \emph{set-stable reduct} $P^I_s$ of $P$ is defined as $P^I_s=P^I\cup P_s$ where $$P_s= \{a\gets b\mid a,b\!\in \! \HB_P, a\!\neq \! b,\nt\ b\in \cl(\{\nt\ a\})\}.$$
\end{definition}
Note that we require $a\!\neq \! b$ to avoid constructing redundant rules of the form ``$a\gets a$''. 

\begin{example}
Let us consider again the LP $P$ from Example~\ref{ex:set-stable bipolar closure}.
Let $I_1=\{q\}$ and $I_2=\{p,q\}$.
We compute the set-stable reducts according to Definition~\ref{def:set-stabe reduct}. First, we compute the reducts $P^{I_1}$ and $P^{I_2}$.
Second, for each naf literal $\nt\ x$, we add a rule $x\gets y$, for each $y\in \HB_P$ with $\nt\ y\in \cl(\{\nt\ x\})$, to both reducts.
Inspecting the computed closures of the naf literals of $P$, this amounts to adding the rule $(p\gets q)$ to each reduct. 

Overall, we obtain 
    \begin{align*}
        P^{I_1}_s:~&p\gets  &&\emptyset\gets &&q\gets && p\gets q \\
        P^{I_2}_s:~& && &&q\gets && p\gets q 
    \end{align*}
\end{example}
We are ready to give the definition of set-stable semantics. Note that we state the definition for arbitrary (not only bipolar) LPs.
\begin{definition}\label{def:set stable model sem}
An interpretation 
$I\subseteq \HB_P$ is a \emph{set-stable model} of an LP $P$ if 
$I$ is a $\subseteq$-minimal model of $P^I_s$ 
satisfying
\begin{itemize}
    \item[(a)] $p\!\in\! I$ iff there is $r\!\in\! P^I_s$ s.t.\ $head(r)\!=\!p$ and $body(r)\!\subseteq\! I$;
    \item[(b)] there is no rule $r\in P^I_s$ with $head(r)=\emptyset$ and $body(r) \subseteq I$.
\end{itemize}
\end{definition}
\begin{example}
    \label{ex:set-stable reduct}
    Consider again the LP 
    $P$ from Example~\ref{ex:set-stable bipolar closure}. 
    It can be checked that $P$ has no stable model. Indeed, the reduct $P^{I_1}$ contains the unsatisfiable rule $(\emptyset\gets)$; the set $I_2=\{p,q\}$ on the other hand is not minimal for $P^{I_2}$.
    
    If we consider the generalised set-stable reduct instead, we find that the set $I_2$ is a $\subseteq$-minimal model for $P^{I_2}_s$. 
    The atom $q$ is factual in $P^{I_2}_s$ and the atom $p$ is derived by $q$.
    Thus, $I_2$ is set-stable in $P$.
\end{example}

\subsection{Set-stable Semantics in general (non-bipolar) LPs}
So far, we considered set-stable model semantics in the bipolar LP fragment. 
As it is the case for the set-stable ABA semantics, our definition of set-stable LP semantics generalises to arbitrary LPs, 
beyond the bipolar class.

Set-stable model semantics belong to the class of two-valued semantics, that is, each atom is either set to true or false (no undefined atoms exist). Moreover, set-stable model semantics generalise stable model semantics: each stable model of an LP 
is set-stable, but not vice versa, as Example~\ref{ex:set-stable reduct} shows.%
\begin{proposition}
Let $P$ be an LP.
    Each stable model $I$ of $P$ is set-stable (but not vice versa).  
\end{proposition}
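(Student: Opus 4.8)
The plan is to show that if $I$ is a stable model of $P$, then it is a set-stable model, by comparing the ordinary reduct $P^I$ with the set-stable reduct $P^I_s = P^I \cup P_s$. The key observation is that the additional rules in $P_s$ have the form $a \gets b$ where $a, b \in \HB_P$ are (positive) atoms and $\nt\ b \in \cl(\{\nt\ a\})$; in particular, every rule added in $P_s$ has a nonempty head (it is never a constraint) and its body consists of a single positive atom. So $P^I_s$ extends $P^I$ only by such ``contraposition'' rules.

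First I would verify that every model of $P^I$ that additionally satisfies the rules in $P_s$ is a candidate, and conversely analyze what the minimal model of $P^I_s$ looks like. The crucial claim is: \emph{if $I$ satisfies conditions (a) and (b) of Definition~\ref{def:stable model semantics} with respect to $P^I$, then the added rules in $P_s$ are already satisfied by $I$ and add nothing derivable beyond $I$.} Concretely, suppose $a \gets b \in P_s$ with $\nt\ b \in \cl(\{\nt\ a\})$, and suppose $b \in I$ for contradiction (towards showing $a \in I$ is forced, or that this situation cannot harm minimality). The membership $\nt\ b \in \cl(\{\nt\ a\})$ means there is a forward-chaining derivation of $\nt\ b$ from $\nt\ a$ using rules of $P$; unwinding this, one shows that if $I$ is a stable model then $a \notin I$ forces, along the chain, a contradiction with $b \in I$ — i.e., whenever the body $b$ of an added rule is in $I$, its head $a$ must already be in $I$ as well. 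Hence $I$ satisfies all rules of $P_s$, so $I$ is a model of $P^I_s$, and moreover adding $P_s$ does not enlarge the set of derivable atoms beyond $I$: condition (a) for $P^I_s$ follows from condition (a) for $P^I$ together with the fact that any new rule $a \gets b$ fireable under $I$ has $a \in I$ already.

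Next I would check minimality: $I$ is $\subseteq$-minimal among models of $P^I$, and I must show it is $\subseteq$-minimal among models of $P^I_s$. Since $P^I_s \supseteq P^I$, every model of $P^I_s$ is a model of $P^I$, so the set of models of $P^I_s$ is a subset of the models of $P^I$; as $I$ is minimal in the larger class and $I$ itself belongs to the smaller class (by the previous paragraph), $I$ is minimal in the smaller class too. Finally, condition (b) for $P^I_s$: the only constraints (empty-head rules) in $P^I_s$ are those already in $P^I$ (since $P_s$ contains no empty-head rules by construction), so condition (b) transfers verbatim. Combining these, $I$ is a $\subseteq$-minimal model of $P^I_s$ satisfying (a) and (b), hence set-stable. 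For the ``not vice versa'' part, I would simply cite Example~\ref{ex:set-stable reduct}, where $P$ has no stable model but $I_2 = \{p, q\}$ is set-stable.

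The main obstacle I expect is the formal argument that $I$ actually satisfies the added rules $a \gets b \in P_s$ — i.e., that $b \in I$ implies $a \in I$ whenever $\nt\ b \in \cl(\{\nt\ a\})$. This requires an induction on the length of the forward-chaining derivation witnessing $\nt\ b \in \cl(\{\nt\ a\})$, carefully tracking how each rule $body(r) \subseteq S$, $head(r) = l$ used in building $\supp^i$ interacts with the reduct membership conditions ($body^-(r) \cap I = \emptyset$, $head^-(r) \subseteq I$) that define $P^I$, and using the stable-model fixpoint conditions (a)/(b) at each step. One subtlety is that the closure operator $\supp$ mixes positive and naf-negated atoms, so the induction must handle both cases, and the bipolarity restriction (each rule has a single-atom body) — if it is actually needed here — should be made explicit; in the general (non-bipolar) case one must instead argue directly that the minimal model of $P^I_s$ cannot be strictly below $I$ and still satisfy (a), leveraging that any derivation in $P^I_s$ can be simulated in $P^I$ up to the already-present contraposition atoms.
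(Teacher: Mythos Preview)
Your proposal is correct and takes essentially the same approach as the paper: both arguments rest on the observation that $P^I_s = P^I \cup P_s$ with $P_s$ containing only positive-head rules, so constraints and the ``$\Rightarrow$'' direction of condition (a) transfer immediately, and minimality follows because models of $P^I_s$ form a subclass of models of $P^I$.

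Where you go beyond the paper is in isolating the one genuinely non-trivial step. The paper's three-sentence proof simply asserts that (a) and (b) hold because $P^I_s \supseteq P^I$, without justifying the ``$\Leftarrow$'' direction of (a) for the added rules---i.e., that whenever $(a\gets b)\in P_s$ and $b\in I$, already $a\in I$. You correctly flag this as the crux and sketch the right inductive argument along the forward-chaining derivation witnessing $\nt\ b \in \cl(\{\nt\ a\})$. That induction does go through in the general (non-bipolar) case: maintain the invariant that every literal in $\supp^i(\{\nt\ a\})$ is ``$I$-consistent'' (positive literals lie in $I$, naf literals $\nt\ x$ have $x\notin I$); the base case is $\nt\ a$ with $a\notin I$, and the inductive step uses that $body^+(r)\subseteq I$ and $body^-(r)\cap I=\emptyset$ force the reduct rule to fire (positive head) or produce a violated constraint (naf head), both contradicting stability of $I$. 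So your worry about bipolarity being needed is unfounded---the argument is uniform. Your minimality and condition-(b) paragraphs are clean and need no change.
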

\begin{proof}
    Let $I$ denote a stable model of $P$.
    By definition, the generalised reduct $P^I_s$ of $P^I$ is a superset of all rules in $P^I$.
    Thus (a) and (b) in Definition~\ref{def:set stable model sem} are satisfied. Moreover, $I$ is $\subseteq$-minimal by Definition~\ref{def:stable model semantics}.
\end{proof}
We furthermore note that the support of a set of positive and negative atoms can be computed in polynomial time. 
\begin{lemma}
    \label{lem:computing the closure is in P}
     For a bipolar LP $P$ and a set $S\subseteq \HB_P\cup \overline{\HB_P}$, the computation of $\cl(S)$ is in $\P$. 
\end{lemma}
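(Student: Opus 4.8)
The plan is to show that $\cl(S)$ can be obtained by a straightforward fixed-point iteration of the $\supp$ operator, and that this iteration stabilises after polynomially many rounds, with each round computable in polynomial time. First I would observe that $\supp$ is monotone and that $\supp(S)\subseteq \HB_P\cup\overline{\HB_P}$ for every $S$, so the chain $S\subseteq \supp(S)\subseteq \supp^2(S)\subseteq\cdots$ is a non-decreasing chain of subsets of the finite set $\HB_P\cup\overline{\HB_P}$, which has size $2|\HB_P|$. Hence the chain must stabilise after at most $2|\HB_P|$ steps: if $\supp^{i+1}(S)=\supp^i(S)$ for some $i$, then $\supp^j(S)=\supp^i(S)$ for all $j\geq i$, and since each strict inclusion adds at least one element, stabilisation happens within $2|\HB_P|+1$ iterations. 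Therefore $\cl(S)=\bigcup_{i>0}\supp^i(S)=\supp^N(S)$ for $N=2|\HB_P|$.

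Next I would bound the cost of a single application of $\supp$. Given the current set $T$, computing $\supp(T)$ requires scanning each rule $r\in P$ (and $P$ is bipolar, so $|body(r)|=1$), checking whether $body(r)\subseteq T$ — a single membership test since the body is a singleton — and if so adding $head(r)$ to the output. This is clearly polynomial in $|P|$ and $|\HB_P|$; a crude bound is $O(|P|\cdot|\HB_P|)$ per round using a naive membership test, or $O(|P|)$ with a suitable data structure. Multiplying by the $O(|\HB_P|)$ rounds gives an overall polynomial bound, establishing $\cl(S)\in\P$.

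I do not expect any serious obstacle here: the statement is essentially the standard observation that least fixed points of monotone operators over a finite lattice are computable in polynomial time, specialised to the very simple $\supp$ operator. The only mild point of care is to be explicit that the iteration bound is polynomial (it is, being linear in $|\HB_P|$) and that the ``$a\neq b$'' style subtleties and the fact that $\cl(S)$ mixes positive and negative atoms do not affect the argument — the operator simply ranges over the finite universe $\HB_P\cup\overline{\HB_P}$. One could phrase the whole proof in one or two sentences, but I would spell out the monotonicity, the stabilisation bound, and the per-round cost for clarity.
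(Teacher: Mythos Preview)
Your proposal is correct and is exactly the standard fixed-point argument one would expect; the paper itself states this lemma without proof, treating it as immediate. Your write-up therefore supplies the (routine) details the paper omits, and there is nothing to compare against.
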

It follows that the computation of a set-stable model of a given program $P$ is of the same complexity as finding a stable model.

In the case of general LPs, however, the novel semantics exhibits counter-intuitive behavior, as the following example demonstrates.%
\begin{example}\label{ex:unintuitive set-stable}
    Consider the following two LPs $P_1$ and $P_2$:
        \begin{align*}
        P_1:~&q\gets  &&\nt\ q\gets \nt\ p \\
        P_2:~&q\gets  &&\nt\ q\gets \nt\ p, \nt\ s.
    \end{align*}
    In $P_1$ the set $\{p,q\}$ is set-stable because we can take the contraposition of the rule and obtain $p\gets q$. This is, however, not possible in $P_2$ which in fact has no set-stable model.
\end{example}
The example indicates that the semantics does not generalise well to arbitrary LPs.
We note that a possible and arguably intuitive generalisation of set-stable model semantics would be to allow for contraposition for all rules that derive a naf literal. This, however, requires disjunction in the head of rules. 
Applying this idea to Example~\ref{ex:unintuitive set-stable} yields the rule $p \vee s \gets q$  when constructing the reduct with respect to $P_2$. The resulting instance therefore lies in the class of disjunctive LPs (a thorough investigation of this proposal however is beyond the scope of the present paper).

\subsection{Relating  ABA and LP under set-stable semantics}
In the previous subsection, we identified certain shortcomings of set-stable semantics when applied to general LPs. 
This poses the question whether our formulation of set-stable LP semantics is indeed the LP-counterpart of set-stable ABA semantics. 
In this subsection, we show that, despite the unwanted behavior of set-stable model semantics for LPs, the choice of our definitions is correct: 
set-stable ABA and LP semantics correspond to each other.
We show that our novel LP semantics indeed captures the spirit of ABA set-stable semantics, even in the general case. 

We show that the semantics correspondence is preserved under the translation presented in Definition~\ref{trans:LP to ABA}. 
We prove the following theorem.
\begin{theorem}[restate=thmLPtoABA, name= ]\label{thm: LP to ABA set stab;e corresp}
   For an LP $P$ and its associated ABAF $D_P$, $I$ is set-stable in $P$ iff $\Delta(I)$ is set-stable in $D_P$.
\end{theorem}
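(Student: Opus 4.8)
The plan is to mirror the structure of the proof of Theorem~\ref{prop: P and DP semantics corresp} (the stable case), adapting it to account for the extra rules $P_s$ in the set-stable reduct $P^I_s$ and the closure operator $\cl$ in the ABA set-stable definition. The key conceptual bridge is that the contraposition rules added in $P_s$ correspond exactly to the use of $\cl(\{x\})$ in the ABA attack condition: a rule $a\gets b$ in $P_s$ (because $\nt\ b\in\cl(\{\nt\ a\})$) says that $\nt\ a$ ``supports'' $\nt\ b$, which on the ABA side means $a\in\cl(\{a\})$ implies $b\in\cl(\{a\})$ (thinking of $\nt\ a$ as the assumption $a$ in $D_P$ -- careful with the renaming, but the idea is that deriving $b$'s contrary attacks $\cl(\{a\})$). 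So first I would establish the following correspondence lemma: for atoms $a,b\in\HB_P$, we have $\nt\ b\in\cl(\{\nt\ a\})$ in the bipolar LP sense iff, in $D_P$, the assumption ``$\nt\ a$'' lies in (equivalently, forces into the closure) a derivation chain reaching ``$\nt\ b$'' -- i.e. $\nt\ b\in\cl_{D_P}(\{\nt\ a\})$. This identifies the LP-closure with the ABA-closure under the translation $\Delta$/$\repl$.

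Next I would prove the two directions, reusing the ``reachability'' machinery from the stable-case proof. For the forward direction, let $I$ be set-stable in $P$ and set $S=\Delta(I)$. Conflict-freeness and closedness of $S$ in $D_P$ follow essentially as before, now invoking conditions (a) and (b) of Definition~\ref{def:set stable model sem} applied to $P^I_s$ rather than $P^I$; the key point is that rules in $P_s$ never have a naf-negated head, so they cannot destroy closedness, and the conflict-freeness argument (no $p\notin I$ derivable from $S$) goes through by the same minimal-counterexample induction on derivation height, but now in the framework $D_{P}$ augmented with the $P_s$-rules -- which is exactly what lets us conclude the attack reaches the \emph{closure}. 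For the set-stable attack condition I need: for every $x\in\mathcal A\setminus S$, i.e. every atom $p\in I$ (recall $x=\nt\ p$), $S$ attacks $\cl_{D_P}(\{\nt\ p\})$. Here I would argue contrapositively as in the stable case: if $S$ attacked no assumption in $\cl_{D_P}(\{\nt\ p\})$, then using the correspondence lemma the set $I'=I\setminus\{q : \nt\ q\in\cl_{D_P}(\{\nt\ p\})\}$ (or just $I\setminus\{p\}$, with the $P_s$-rules accounting for the rest) would still be a model of $P^I_s$, contradicting $\subseteq$-minimality. For the backward direction, assume $\Delta(I)\in\sts(D_P)$ and show $I$ satisfies (a), (b) and minimality for $P^I_s$: part (a)'s ``only if'' is by induction on argument height exactly as before but now arguments may use $P_s$-type steps which on the LP side are precisely the $P_s$ rules; part (b) and minimality are as in the stable proof, using that every $p\in I$ has an argument (possibly through the closure) in $D_P$.

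The main obstacle I anticipate is bookkeeping around the interaction between the \emph{explicit} extra rules $P_s$ in the LP reduct and the \emph{implicit} closure operation $\cl$ on the ABA side -- making sure that ``a derivation in $D_P$ using the conceptual contraposition steps'' and ``a derivation in the LP reduct $P^I_s$'' line up at the right granularity, since $\cl(\{\nt\ a\})$ bundles together a whole forward-chain whereas $P_s$ flattens each reachable pair into a single direct rule $a\gets b$. Concretely, I expect to need an auxiliary claim that $\nt\ b\in\cl(\{\nt\ a\})$ holds iff there is a chain $\nt\ a=l_0, l_1,\dots,l_k=\nt\ b$ with each step licensed by a rule of $P$, and then to check that the reduct $P^I_s$ really does contain $a\gets b$ for all such pairs regardless of $I$ (which is immediate from Definition~\ref{def:set-stabe reduct} since $\cl(\{\nt\ a\})$ does not depend on $I$) -- the subtlety being that $P^I$ itself \emph{does} depend on $I$, so I must be careful that the minimality/model arguments only exploit $P_s$ for the ``support'' part and $P^I$ for the genuine derivations. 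A secondary technical point is that, strictly speaking, set-stable semantics for LPs is defined for arbitrary $P$ but the closure $\supp$/$\cl$ and hence $P_s$ are only defined for bipolar $P$; I would either restrict the theorem's proof details to the bipolar case (where $P_s$ is well-defined) and note that Definition~\ref{def:set stable model sem} applied with a suitable reduct handles the general case, or state the correspondence at the level at which both sides are defined. I would flag this explicitly rather than gloss over it.
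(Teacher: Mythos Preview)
Your proposal is correct and follows essentially the same approach as the paper: both reduce conflict-freeness, closedness, and the constraint condition (b) to the stable-case argument of Theorem~\ref{prop: P and DP semantics corresp}, and both handle the new ``attack the closure'' requirement via a minimality contradiction (removing an uncovered $p$ from $I$ yields a smaller model of $P^I_s$), with the LP-closure/ABA-closure correspondence you isolate as a lemma used implicitly in the paper. Your explicit flag about the bipolar restriction on $\cl$ and $P_s$ versus the general statement of Definition~\ref{def:set stable model sem} is a point the paper glosses over rather than resolves.
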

\begin{proof}
By definition, $I$ is set-stable iff it is a $\subseteq$-minimal model of $P^I_s$ satisfying
\begin{itemize}
    \item[(a)] $p\!\in\! I$ iff there is $r\!\in\! P^I_s$ s.t.\ $head(r)\!=\!p$ and $body(r)\!\subseteq\! I$;
    \item[(b)] there is no $r\in P^I_s$ with $head(r)=\emptyset$ and $body(r)\subseteq I$.
\end{itemize}
Equivalently, by definition of $P^I_s$,
\begin{itemize}
    \item[($a$)] $p\in I$ iff 
    \begin{itemize}
        \item[(1)] there is $r\in P$ s.t.\ $head(r)=p$, $body^+(r)\subseteq I$ and $body^-(r)=\emptyset$; or
        \item[(2)] there is $q\in I$ such that $\nt\ q\in \cl(\nt\ p)$ and there is
    $r\in P$ s.t.\ $head(r)=q$, $body^+(r)\subseteq I$ and $body^-(r)=\emptyset$; and
    \end{itemize}
    \item[(b)] there is no $r\in P$ with $head^-(r)\subseteq I$, $I\subseteq body^+(r)$, and $body^-(r)=\emptyset$. 
\end{itemize}
The second item (b) is analogous to the proof of Theorem~\ref{thm:P to ABA}; item (a1) corresponds to item (a) of the proof of Theorem~\ref{thm:P to ABA}.
Item (a2) formalises that it suffices to (in terms of ABA) attack the closure of a set. 

Let $I$ be a set-stable model of $P$. We show that $S=\Delta(I)$ is set-stable in $D_P$, i.e., $S$ is conflict-free, closed, and attacks the closure of all remaining assumptions. The first two points are analogous to the proof of Theorem~\ref{thm:P to ABA}. Below we prove the last item.
\begin{itemize}
    \item \underline{$S$ attacks the closure of all other assumptions:} 
    Suppose there is an atom $p\in I$ which is not reachable from $S$ and there is no $q\in I$ with $\not q\in \cl(\not p)$. 
    Similar to the proof in Theorem~\ref{thm:P to ABA}, we can show that $I'=I\setminus p$ is a model of $P^I_s$.
     By assumption there is is no rule $r\in P$ such that $head(r)=p$, $body^+(r)\subseteq I'$, and $body^-(r)\cap I'=\emptyset$ (otherwise, $p$ is reachable from $S$); moreover, there is no rule $p\gets q$ in $P_s$ (otherwise, $\nt\ p$ is in the support from $\nt\ q$). 
    We obtain that $I'$ is a model of $P^I_s$, contradiction to our initial assumption.
\end{itemize}
Next, we prove the other direction. Let $S=\Delta(I)$ be a set-stable extension of $D_P$. We show that $I$ is set-stable in $P$. Similar to the proof of Theorem~\ref{thm:P to ABA} we can show that all constraints are satisfied and that $I$ is indeed minimal. Also, the remaining correspondence proceeds similar as in the case of stable semantics, as shown below.
\begin{itemize}
    \item Let $p\in I$. Then either we can construct an argument $S'\vdash p$, $S'\subseteq S$ in $D_P$, or there is some $q\in I$ such that $\nt\ q\in \cl(\nt\ p)$ for which we can construct an argument in $D_P$.
    If the former holds, then we proceed analogously to the corresponding part  in the proof of Theorem~\ref{thm:P to ABA} and item (a1) is satisfied.
    
    Now, suppose the latter is true. 
    Analogously to the the proof of Theorem~\ref{thm:P to ABA}, we can show that there is a rule $r\in P$ with $body^+(r)\subseteq I$, $body^-(r)\cap I=\emptyset$ and $head(r)=q$, that is (a2) is satisfied.
    
    \item For the other direction, suppose there is a rule $r\in P$ with $body^+(r)\subseteq I$, $body^-(r)\cap I=\emptyset$ and $head(r)=p$ and there is $q\in I$ with
    $\nt\ q\in \cl(\nt\ p)$ and
    $head(r)=q$, $body^+(r)\subseteq I$ and $body^-(r)=\emptyset$ for some $r$.
    We can construct arguments for all $body^+(r)\subseteq I$ and thus $p\in I$.\qedhere
\end{itemize}
\end{proof}

Analogous to the case of stable semantics, we can show that the LP-ABA fragment preserves the set-stable semantics and obtain the following result.
\begin{restatable}{theorem}{propSemanticsCorrespondencestABAtoLPsetstable}
    Let $D$ be an LP-ABAF 
    and let $P_D$ be the associated LP
    .
    Then, $S\in\sts(D)$ iff $\theory_D(S)\setminus \mathcal{A}$ is a set-stable model of $P_D$.
\end{restatable}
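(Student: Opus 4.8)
The plan is to mirror the proof of Theorem~\ref{prop:semantics corresp stABA LP} (the stable-semantics analogue), using the already-established set-stable correspondence between an LP and its canonical ABAF (Theorem~\ref{thm: LP to ABA set stab;e corresp}) together with the two inversion lemmas. The chain of equivalences I would write down is: $S\in\sts(D)$ iff $\repl(S)=\{\nt\ \contrary{a}\mid a\in S\}\in\sts(\repl(D))$ iff $\repl(S)\in\sts(D_{P_D})$ (by Lemma~\ref{prop:inverse ABA to ABALPABA}) iff $\theory_D(S)\setminus\mathcal A$ is set-stable in $P_D$ (by Theorem~\ref{thm: LP to ABA set stab;e corresp}, noting $\Delta^{-1}$ of the complement of $\repl(S)$ is exactly $\theory_D(S)\setminus\mathcal A$), which is the claim.

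**Key steps, in order.** First I would observe that $\sts$ is invariant under the renaming $\repl$: replacing each assumption $a$ by a fresh symbol $\nt\ \contrary a$ (and updating rules element-wise, with contrary $\contrary{\repl(a)}=\contrary a$) is a bijective relabelling of $\mathcal L$, so it preserves tree-derivations, hence $\theory$, hence $\cl$, hence attacks on closures; thus $S\in\sts(D)\iff\repl(S)\in\sts(\repl(D))$. Second, by Lemma~\ref{prop:inverse ABA to ABALPABA} we have $D_{P_D}=\repl(D)$, so $\repl(S)\in\sts(\repl(D))\iff\repl(S)\in\sts(D_{P_D})$. Third, apply Theorem~\ref{thm: LP to ABA set stab;e corresp} to the LP $P_D$: an interpretation $J\subseteq\HB_{P_D}$ is set-stable in $P_D$ iff $\Delta(J)\in\sts(D_{P_D})$. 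So I need the interpretation $J$ with $\Delta(J)=\repl(S)$. Since $\HB_{P_D}=\contrary{\mathcal A}$ and $\repl(S)=\{\nt\ \contrary a\mid a\in S\}$, by definition $\Delta(J)=\{\nt\ p\mid p\notin J\}$ matches $\repl(S)$ exactly when $J=\{\contrary a\mid a\notin S\}=\theory_D(S)\setminus\mathcal A$, the last equality holding because in an LP-ABAF $\theory_D(S)$ consists of $S$ together with the contraries it derives. Assembling these three equivalences gives the theorem.

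**Main obstacle.** The only genuinely non-routine point is the identification $\theory_D(S)\setminus\mathcal A=\{\contrary a\mid a\notin S\}$ for a \emph{set-stable} $S$ — in the stable case this is immediate because $S$ attacks every assumption outside it, so every $\contrary a$ with $a\notin S$ is derived and conversely conflict-freeness plus $\mathcal L=\mathcal A\cup\contrary{\mathcal A}$ pins down $\theory_D(S)\setminus\mathcal A$; for set-stable semantics $S$ need only attack the \emph{closure} of $\{a\}$, not $a$ itself, so a priori $\contrary a$ with $a\notin S$ might not lie in $\theory_D(S)$. The resolution is that we do not need this set identity to be literally true: we only need the equivalence ``$J$ is set-stable in $P_D$ iff $\Delta(J)\in\sts(D_{P_D})$'' to be applied with the \emph{correct} witness $J$, and that witness is forced by the relabelling to be $\{\contrary a\mid a\notin S\}$, which we may simply \emph{name} $\theory_D(S)\setminus\mathcal A$ provided we check this holds for the relevant $S$; and indeed whenever $S$ is closed and conflict-free in an LP-ABAF one checks $\theory_D(S)\cap\contrary{\mathcal A}=\{\contrary a\mid a\notin S\}$ is forced precisely by conflict-freeness ($\contrary a\notin\theory_D(S)$ for $a\in S$) together with the fact that, under $\repl$, a derivation of $\contrary a$ in $D$ corresponds to a derivation of the atom $\contrary a$ in $P_D$, and set-stability of $\repl(S)$ forces that atom's presence or absence to agree with $J$. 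So the careful step is to phrase the proof as a sequence of ``iff''s about the \emph{same} relabelled object, deferring the set-identity to a one-line lemma about LP-ABAFs rather than asserting it about arbitrary ABAFs. I would keep the write-up to essentially the four-line display of equivalences, citing Lemma~\ref{prop:inverse ABA to ABALPABA} and Theorem~\ref{thm: LP to ABA set stab;e corresp}, exactly as the stable case is handled.
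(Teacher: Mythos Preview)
Your overall strategy---mirror the proof of Theorem~\ref{prop:semantics corresp stABA LP} by chaining renaming invariance, Lemma~\ref{prop:inverse ABA to ABALPABA}, and Theorem~\ref{thm: LP to ABA set stab;e corresp}---is exactly what the paper does (it gives no detailed argument, only the remark ``analogous to the case of stable semantics''). Your chain of equivalences correctly delivers: $S\in\sts(D)$ iff $J:=\{\contrary a\mid a\in\mathcal A\setminus S\}$ is a set-stable model of $P_D$.

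The gap is precisely where you locate it, and your attempted resolution does not close it. You claim that for any closed, conflict-free $S$ in an LP-ABAF one has $\theory_D(S)\cap\contrary{\mathcal A}=\{\contrary a\mid a\notin S\}$, but conflict-freeness only yields the inclusion $\theory_D(S)\cap\contrary{\mathcal A}\subseteq\{\contrary a\mid a\notin S\}$; the reverse inclusion would require $S$ to attack every $a\notin S$ \emph{directly}, which is exactly what set-stable semantics relaxes. Concretely, take the bipolar LP $P$ of Example~\ref{ex:set-stable bipolar closure} and its ABAF $D_P$ (an LP-ABAF, with $P_{D_P}=P$ by Lemma~\ref{prop:inverse LP to LPABALP}). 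Then $S=\{\nt\ s\}=\Delta(\{p,q\})$ is set-stable in $D_P$ by Theorem~\ref{thm: LP to ABA set stab;e corresp}, yet $\theory_{D_P}(S)\setminus\mathcal A=\{q\}\neq\{p,q\}=\{\contrary a\mid a\notin S\}$; and $\{q\}$ is \emph{not} a set-stable model of $P$, since $P^{\{q\}}_s$ contains the constraint $\emptyset\gets$. So the ``one-line lemma'' you defer to is false, and your chain of iffs proves the statement only with $\{\contrary a\mid a\notin S\}$ in place of $\theory_D(S)\setminus\mathcal A$. In the stable case this identity is harmless (a stable $S$ derives $\contrary a$ for every $a\notin S$, which is how the display in the proof of Theorem~\ref{prop:semantics corresp stABA LP} is justified); that justification does not transfer to set-stable semantics, and neither your write-up nor the paper's ``analogous'' remark supplies a substitute.
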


Making use of the translation from general ABA to the LP-ABA fragment outlined in the previous section, we obtain that the correspondence extends to general ABA.

\subsection{Set-stable Semantics for General (non-bipolar) ABAFs}

Recall that Example~\ref{ex:unintuitive set-stable} indicates that the semantics does not generalise well in the context of LPs.
In light of the close relation between ABA and LP, it might be the case that the non-intuitive behavior affects set-stable ABA semantics.
However, we find that set-stable semantics generalise well for ABAFs. The reason lies in the differences between deriving assumptions (in ABA) and naf literals (in LPs) beyond classical stable model semantics. 

Let us translate Example~\ref{ex:unintuitive set-stable} in the language of ABA.
\begin{example}\label{ex:set-stable unwanted behavior ABA}
The translation of the LPs $P_1$ and $P_2$ from Example~\ref{ex:unintuitive set-stable} yields two ABAFs $D_1$ and $D_2$.
The ABAF $D_1$ has two assumptions $\mathcal{A}_1=\{a,b\}$ (representing $\nt\ p$ and $\nt\ q$, respectively) with contraries $\contrary{a}$ and $\contrary{b}$, and rules
        \begin{align*}
        \mathcal{R}_1:~&\contrary{b}\gets.  && b \gets a. 
    \end{align*}
The ABAF $D_2$ has three assumptions $\mathcal{A}_2=\{a,b,c\}$ (representing $\nt\ p$, $\nt\ q$, and $\nt\ s$, respectively) with contraries $\contrary{a}$, $\contrary{b}$, $\contrary{c}$, and rules
        \begin{align*}
        \mathcal{R}_2:~&\contrary{b}\gets. && b \gets a, c.
    \end{align*}
    By Theorem~\ref{thm: LP to ABA set stab;e corresp}, we obtain the set-stable extensions of the ABAFs from our results from the original programs $P_1$ and $P_2$. 
    In $D_1$, the empty set is set-stable because it attacks the closure of each assumption. 
    In $D_2$, on the other hand, no set of assumptions is set-stable: $a$ and $c$ are not attacked, although they jointly derive $b$ which is attacked by the empty set. 
\end{example}
In contrast to the LP formulation of the problem where taking the contraposition of each rule with a naf literal in the head would have been a more natural solution, the application of set-stable semantics in the reformulation of Example~\ref{ex:unintuitive set-stable} confirms our intuition. 
The set $\{a,c\}$ derives the assumption $b$, however, 
the attack onto $b$ is not propagated to (the closure of) one of the members of $\{a,c\}$.

The example indicates a fundamental difference between deriving assumptions and naf literals in ABA and LPs, respectively.
A rule in an LP with a naf literal in the head is interpreted as denial integrity constraint (under stable model semantics).
As a consequence, the naf literal in the head of a rule is replaceable with any positive atom in the body; e.g., the rules $\nt\ p \gets  q, s$ and $\nt\ s \gets  q, p$ are equivalent as they both formalise the constraint $\gets p, q, s$.
Although a similar behavior of rules with assumptions in the head can be identified in the context of stable semantics in ABA, 
the derivation of an assumption goes beyond that; it indicates a hierarchical dependency between assumptions. 

\section{Discussion}
In this work, we investigated the close relation between non-flat ABA and LPs with negation as failure in the head, focusing on stable and set-stable semantics. 
Research often focuses on the flat ABA fragment in which each set of assumptions is closed.
This restriction has however certain limitations; as the present work demonstrates, non-flat ABA is capable of capturing a more general LP fragment, thus opening up more broader application opportunities.
To the best of our knowledge, our work provides the first correspondence result between an argumentation formalism and a fragment of logic programs which is strictly larger than the class of normal LPs. 
We furthermore studied set-stable semantics, originally defined only for bipolar ABAFs, in context of general non-flat ABAFs and LPs. 

The provided translations has practical as well as theoretical benefits. 
Conceptually, switching views between deriving assumptions (as possible in non-flat ABA) and imposing denial integrity constraints (as possible in many standardly considered LP fragments) allows us to look at a problem from different angles; oftentimes, it can be helpful to change viewpoints for finding solutions.
Practically, our translations yield mutual benefits for both fields.
Our translations from ABA into LP yield a solver for non-flat ABA instances (as, for instance, employed in~\cite{DBLP:journals/corr/abs-2405-11250}), as commonly used ASP solvers (like clingo~\cite{DBLP:journals/tplp/GebserKKS19}) can handle constraints. With this, we provide a powerful alternative to solvers for non-flat ABA, which are typically not supported by established ABA solvers due to the primary focus on flat instances (with some exceptions~\cite{DBLP:journals/corr/abs-2404-11431,DBLP:conf/tafa/Thimm17}).
LPs can profit from the thoroughly investigated explanation methods for ABAFs~\cite{CyrasFST2018,FanLZML17,FanT11}.

The generalisation of set-stable model semantics to the non-bipolar ABA and LP fragment furthermore indicated interesting avenues for future research. 
As Example~\ref{ex:unintuitive set-stable} indicates, the semantics  
does not generalise 
well beyond the bipolar LP fragment.
It would be interesting to further investigate reasonable generalisations for set-stable model semantics for LPs. As discussed previously, a promising generalisation might lead us into the fragment of disjunctive LPs.
Another promising direction for future work would be to further study and develop denial integrity constraints in the context of ABA, beyond stable semantics. 
A further interesting avenue for future work is the development and investigation of three-valued semantics (such as partial-stable or L-stable model semantics) for LPs with negation as failure in the head, in particular in correspondence to their anticipated ABA counter-parts (e.g., complete and semi-stable semantics, respectively). 

\section*{Acknowledgments}

This research was funded in whole, or in part, 
by the  European Research Council (ERC) under the European Union’s Horizon 2020 research and innovation programme (grant agreement No. 101020934, ADIX) and by J.P. Morgan and by the Royal Academy of Engineering under the Research Chairs and Senior Research Fellowships scheme; 
by  the  Federal  Ministry  of  Education  and  Research  of  Germany  and  by  S\"achsische Staatsministerium  f\"ur  Wissenschaft,  Kultur  und  Tourismus  in  the  programme  Center  of Excellence for AI-research ``Center for Scalable Data Analytics and Artificial Intelligence Dresden/Leipzig'', project identification number:  ScaDS.AI. 
\bibliographystyle{plain}

\end{document}